\documentclass[11pt]{article}

\usepackage[margin=1in]{geometry}

\usepackage[T1]{fontenc}
\usepackage{times}
\usepackage[hyphens]{url}
\usepackage{graphicx}
\usepackage{natbib}
\usepackage{caption}
\usepackage{algorithm}
\usepackage{algpseudocode}

\usepackage{amsmath}
\usepackage{amssymb}
\usepackage{amsthm}
\newtheorem{lemma}{Lemma}
\newtheorem{remark}{Remark}
\usepackage{subcaption}
\usepackage{booktabs}
\usepackage{multirow}
\usepackage{xcolor}
\usepackage{textcomp}
\usepackage{xurl}
\usepackage{fontawesome5}
\usepackage{wrapfig}
\usepackage{needspace}
\usepackage{hyperref}
\hypersetup{
  colorlinks=true,
  linkcolor=blue!50!black,
  citecolor=blue!50!black,
  urlcolor=blue!50!black
}

\definecolor{cIncorrect}{rgb}{0.0,0.55,0.55} 
\definecolor{cGamma}{rgb}{0.90,0.49,0.13}    
\definecolor{cRhythm}{rgb}{0.55,0.30,0.78}   

\newcommand{\drifttitleblue}[1]{{\color[rgb]{0.18,0.48,0.78}#1}}

\newcommand{\sg}{\operatorname{sg}}

\definecolor{impgreen}{rgb}{0.0,0.55,0.0}
\newcommand{\imp}[1]{\ensuremath{_{\textcolor{impgreen}{+#1}}}}

\newcommand{\includefig}[2][]{%
  \IfFileExists{#2}{%
    \includegraphics[#1]{#2}%
  }{%
    \fbox{\parbox[c][0.22\textheight][c]{0.9\linewidth}{\centering\ttfamily Missing figure:\\ \detokenize{#2}}}%
  }%
}

\title{\drifttitleblue{DRIFT}: \drifttitleblue{D}ifficulty \drifttitleblue{R}outing Self-D\drifttitleblue{I}stillation with Rhythm-Gated Exploration and Success Bu\drifttitleblue{F}fer \drifttitleblue{T}raining}

\author{\normalfont 
Haisen Luo$^{3,*}$, 
Yiwei Liu$^{2,*}$, 
Haoning Wang$^{1,*}$, 
Dan Liu$^{3,*}$, Junxi Yin$^{3,\dagger}$,\\
Haotian Wang$^{3}$, Lei Zhang$^{3}$, Xiaoyu Tian$^{3}$, Shuaiting Chen$^{3}$, Yuansheng Song$^{3}$, Baoyan Guo$^{3}$,\\
Xiongfei Yan$^{3}$, Bolan Yang$^{3}$, Chengwei Liu$^{3}$, Ming Cui$^{3}$, Jiong Chen$^{3,\ddagger}$\\
\\
$^{1}$Tsinghua University \quad
$^{2}$ENS Paris-Saclay \quad
$^{3}$Beike Language and Intelligence\\
{\small\bfseries $^{*}$Equal contribution. \quad
$^{\dagger}$Project leader. \quad
$^{\ddagger}$Corresponding author.}\\
\\
\smallskip
\faGithub~{\fontfamily{ppl}\selectfont \textbf{Code:}}~\url{https://github.com/LianjiaTech/drift}.\\
\faEnvelope[regular]~\texttt{
whn22@mails.tsinghua.edu.cn,
sunsetlyw2002@gmail.com,
}\\
\texttt{\{luohaisen002,liudan190,yinjunxi001,chenjiong\}@ke.com}
}

\begin{document}

\maketitle

\begin{abstract}
Enabling large language models to achieve stable self-improvement without external expert supervision remains a central challenge in complex reasoning tasks. Existing self-distillation and reinforcement learning methods lack explicit mechanisms for tracking problem-level learning progress and adapting optimization strategies accordingly. Consequently, training may over-optimize easy problems, receive weak supervision from hard problems, and fail to sufficiently explore borderline cases. To resolve these issues, we propose DRIFT, an online self-evolution policy optimization framework for large language models. DRIFT regulates the model's self-improvement process through the joint use of Difficulty Routing and Rhythm Gating. The former identifies the model's learning state at the problem level and dynamically allocates self-distillation and reinforcement learning signals, while the latter refines policy updates at the token level by selectively amplifying verified policy innovations at unresolved decision boundaries. By further incorporating a success buffer and a two-stage curriculum learning strategy, DRIFT preserves high-quality historical experience while progressively guiding the model from reliable behavior acquisition toward stable policy evolution. Evaluated across five benchmarks and three model scales, DRIFT surpasses the peak performance of both GRPO and SDPO across all evaluated metrics. On the average score over the five benchmarks, DRIFT achieves 79.5\%, outperforming GRPO by 9.5\% and SDPO by 7.5\%, establishing a new state-of-the-art result. Notably, on ToolUse, DRIFT reaches an accuracy of 79.2\%, improving over GRPO by 13.5\% and SDPO by 10.7\%, setting a new state-of-the-art and substantially outperforming all concurrent methods.
\end{abstract}

\section{Introduction}

Recently, large language models (LLMs) have made remarkable progress on complex tasks such as mathematical reasoning, scientific problem solving, and tool use \cite{jaech2024learning,guo2025deepseekr1,deepseekai2026deepseekv4,comanici2025gemini25,olmo2025olmo3}. Beyond scaling pre-training, feedback-based reinforcement learning and self-distillation have emerged as important approaches for strengthening LLM reasoning\cite{shao2024deepseekmath,gu2023minillm,hubotter2026reinforcement,zhao2026opsd,jin2026entropyopd,fu2026revisitingopd,pan2026rlcsd,li2026rethinkingopd}. By learning from trajectories the model produces itself, these methods reduce the dependence on human expert demonstrations and offer a scalable route to self-evolution.

\begin{wrapfigure}[14]{r}{0.52\textwidth}
    \vspace{-1.0\baselineskip}
    \centering
    \includegraphics[width=\linewidth]{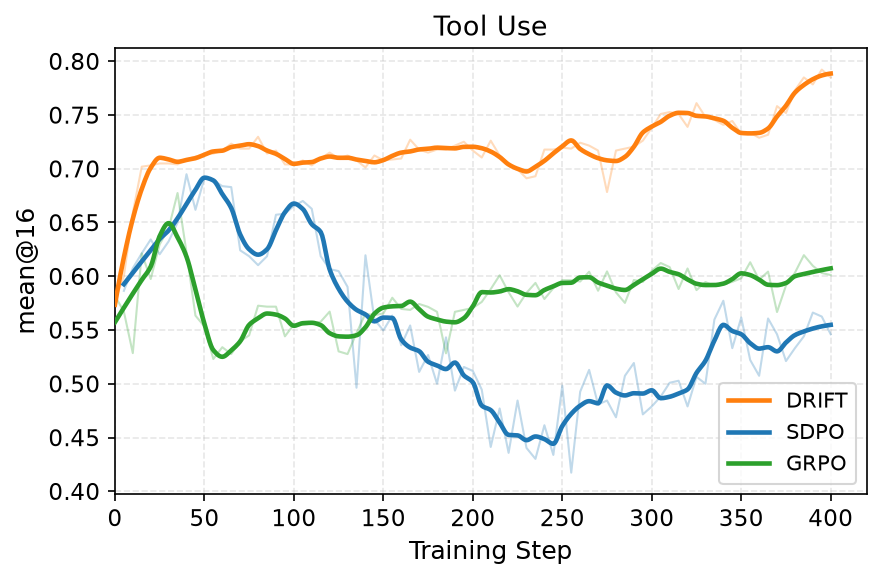}
    \captionsetup{font=scriptsize,skip=2pt}
    \caption{DRIFT substantially outperforms SDPO and GRPO on Tool Use with Qwen3-8B.}
    \label{fig:fig3}
    \vspace{-0.1\baselineskip}
\end{wrapfigure}

Despite their effectiveness, existing self-evolution paradigms share a key limitation: their training signals are rarely adapted to the model's evolving capability. Reinforcement learning methods such as GRPO~\citep{shao2024deepseekmath} encourage policy exploration through relative rewards, but the resulting updates can become noisy when successful samples are sparse or reward distributions are unstable. Self-distillation methods such as SDPO~\citep{hubotter2026reinforcement} reuse the model's correct solutions as supervision, yet they typically treat all such solutions uniformly, ignoring whether a problem is easy, near the model's capability boundary, or only occasionally solved. Recent sample-routing methods try to combine self-distillation with reinforcement learning~\citep{li2026srpo}, but most rely on immediate outcomes within the current batch and lack a persistent characterization of problem difficulty and learning progress \cite{zhang2025grpolead}. As a result, training signals are allocated suboptimally: already-mastered problems keep receiving redundant updates, hard problems fail to yield reliable supervision, and high-value boundary cases are under-explored~\cite{zhang2025grpolead,baroian2026prompt,hubotter2026reinforcement}. Compounding this, many methods neither retain nor reuse high-quality correct solutions across batches, preventing the formation of a durable memory for self-evolution~\citep{lin1992self,schaul2016prioritized,zhan2026exgrpo}.

To address these issues, we propose DRIFT, an online self-evolution policy optimization framework for LLMs. DRIFT adaptively coordinates self-distillation and reinforcement learning according to the model's estimated mastery of each problem, balancing error correction, exploration, and stable convergence. Using the model's own correct solutions as references, DRIFT routes each problem to a strategy matched to its difficulty: hard problems receive explicit distillation guidance, problems near the capability boundary drive reinforcement-learning exploration, and reliably solved problems are given fewer updates to avoid unnecessary policy perturbation. Crucially, a cross-batch memory retains high-quality correct solutions as distillation references, enabling hard problems rarely solved in the current batch to benefit from successful trajectories accumulated over time. DRIFT further refines reinforcement-learning updates through token-level modulation, using privileged teacher--student uncertainty dynamics to selectively amplify successful innovations where the student's reasoning decisions remain unstable. By integrating difficulty-aware signal allocation, token-level credit shaping, and cross-batch reuse of successful experience, DRIFT sustains a robust online self-evolution process without any external expert supervision.

We evaluate DRIFT by training Qwen3-8B~\citep{qwen3} on a diverse suite of tasks spanning biology, chemistry, materials, physics, and tool use. DRIFT clearly surpasses our reproduced SDPO and GRPO baselines and further improves over recent methods such as SRPO~\citep{li2026srpo}, with especially pronounced gains on the tool-use task (Figure~\ref{fig:fig3}).

Our contributions are as follows:
\begin{enumerate}
    \item \textbf{Dynamic Difficulty Routing.} Historical pass rates are used as a stable estimate of the model's mastery level, enabling dynamic decisions on when to distill from past successful trajectories, when to explore new strategies, and when to reduce redundant training.
    \item \textbf{Rhythm Gating Exploration.} A structure-preserving exploration mechanism is introduced to encourage policy variation near the success boundary, while preserving useful reasoning patterns through token-level Rhythm Gating.
    \item \textbf{Two-Stage Curriculum Learning.} A two-stage curriculum strategy is designed to align training behaviors with different phases of model learning. The first stage focuses on rapid capability growth and success buffer accumulation, allowing the model to reach a stable performance plateau efficiently. The second stage shifts toward balanced exploration, correction, and convergence, enabling continuous and stable performance improvement.
    \item \textbf{Experience Replay via Success Buffer.} DRIFT incorporates a historical success buffer to reuse high-quality trajectories. It further adopts a curriculum learning strategy to support smooth model evolution and uses Rhythm Gating to provide structured exploration rewards on boundary problems.
\end{enumerate}

\section{Preliminaries}

\subsection{GRPO and SDPO}

Group Relative Policy Optimization (GRPO) is a critic-free policy optimization method commonly used in post-training with verifiable rewards. Given a prompt $x$, the current policy samples a group of $n$ candidate rollouts. Each rollout receives a scalar reward, which is normalized within the group to compute its relative advantage $A_i$. The policy is then optimized using a PPO-style clipped surrogate objective, where $\rho_{i,t}(\theta)=\frac{\pi_\theta(y_{i,t}| x,y_{i,<t})}{\pi_{\theta_{\mathrm{old}}}(y_{i,t}| x,y_{i,<t})}$ denotes the token-level importance ratio:
\[
\mathcal{J}_{\mathrm{GRPO}} (\theta)
=\mathbb{E}_{i,t}\Big[\min\Big(\rho_{i,t}(\theta)\,A_i,\operatorname{clip}\big(\rho_{i,t}(\theta),1\pm\epsilon\big)A_i\Big)\Big].
\]
Since $A_i^{\mathrm{GRPO}}$ is sequence-level, all tokens in the same rollout share the same advantage. GRPO therefore reinforces high-reward trajectories and suppresses low-reward ones, but provides only coarse credit assignment: it cannot identify which tokens are responsible for the final outcome.

Self-Distillation Policy Optimization (SDPO) complements this scalar reward with dense token-level supervision from a self-teacher. Given the student policy $\pi_\theta(\cdot\mid x)$, SDPO constructs a teacher distribution conditioned on auxiliary feedback $f$, written as $\pi_\theta(\cdot\mid x,f)$, where $f$ can be a successful sibling rollout, verifier feedback, execution trace, or other environmental signal. For a sampled trajectory $y_i$, SDPO compares the student and teacher distributions at each prefix position. We adopt the Jensen--Shannon divergence (JSD) for its symmetric treatment of the evolving teacher and student policies and its ability to preserve exploration while stabilizing optimization; see Appendix~\ref{app:divergence} for a detailed analysis. The resulting distillation loss is:
\[
\mathcal{L}_{\mathrm{SDPO}}(\theta)
=
\sum_t
\mathrm{JSD}
\Big(
\pi_\theta(\cdot \mid x, y_{i,<t})
\big\|
\operatorname{sg}
[
\pi_\theta(\cdot \mid x, f, y_{i,<t})
]
\Big).
\]

\begin{figure*}[t]
    \centering
    \includegraphics[width=0.9\linewidth, keepaspectratio]{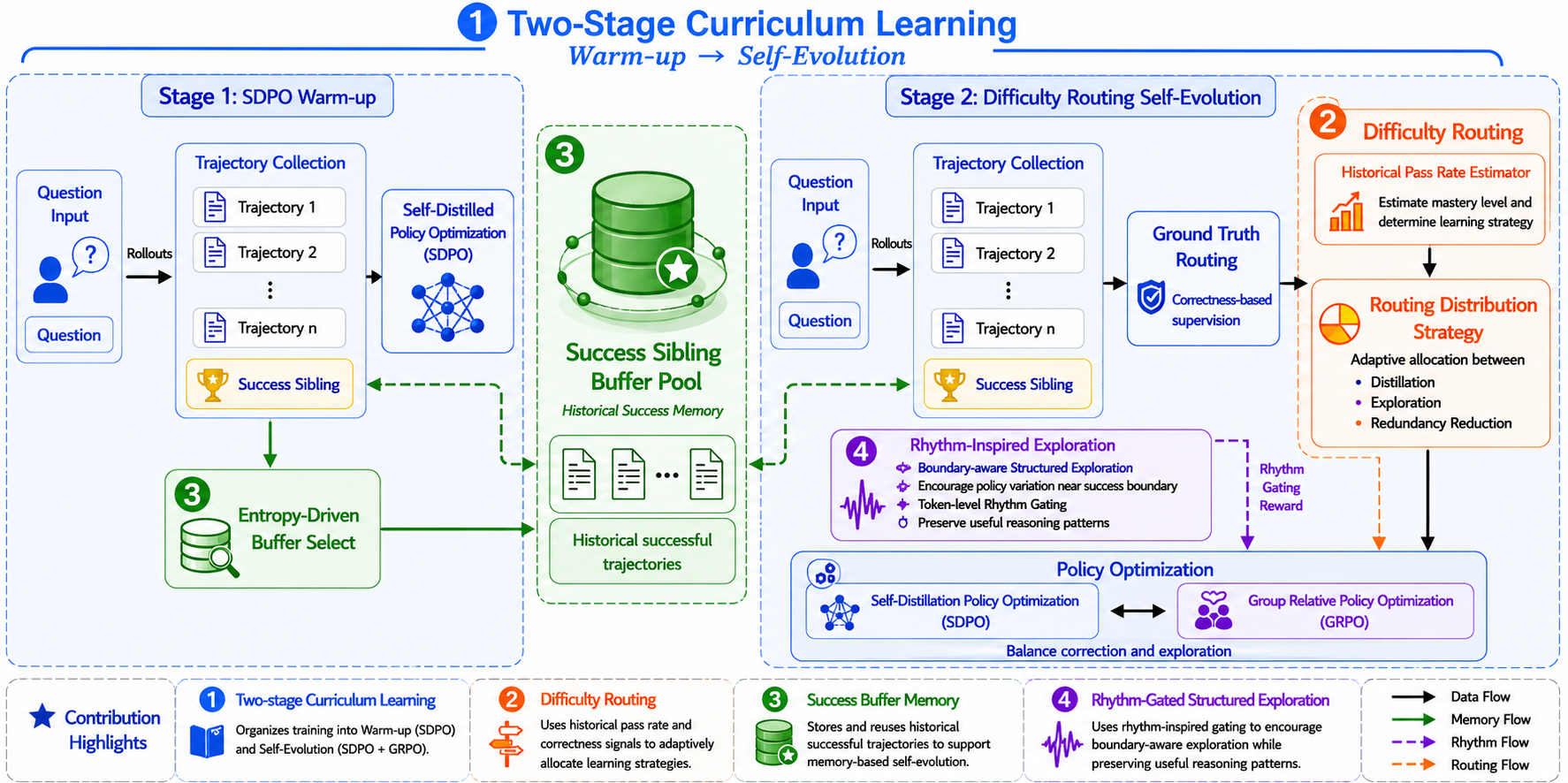}
    \caption{Overall DRIFT training pipeline.}
    \label{fig:drift_pipeline}
\end{figure*}

Although GRPO and SDPO support self-improvement through reinforcement learning and self-distillation, respectively, both lack adaptation to the model's evolving capability. GRPO becomes ineffective when intra-group rewards are homogeneous, and its sequence-level advantage spreads updates over all tokens, preventing fine-grained localization of the reasoning steps that determine correctness. SDPO, in contrast, relies on successful trajectories as supervision but usually treats all successes uniformly, without distinguishing easy problems, boundary cases, or accidental solutions to hard problems. This can lead to redundant imitation on already-mastered samples, insufficient reinforcement of high-value boundary cases, and no effective supervision for hard problems without positive trajectories. Thus, both methods lack a mechanism for tracking problem-level learning state and dynamically allocating optimization signals, which is the core gap addressed by DRIFT.

\section{Method}
DRIFT is built around a two-level signal refinement mechanism and consists of three interlocking components: a \textbf{two-stage curriculum} that first uses self-distillation for a fast warm-up and to accumulate experience, and then switches to a stable optimization stage that combines self-distillation with reinforcement learning; at the problem level, \textbf{difficulty routing} modulates the strength of reinforcement-learning updates according to the model's evolving mastery of each problem, while at the token level, \textbf{rhythm-gated structured exploration} selectively amplifies verified innovations at unresolved decision boundaries; and a privileged self-teacher that grows stronger throughout training, maintained jointly by the teacher-selection rule in the warm-up stage and a cross-stage \textbf{success buffer}, ensuring that dense token-level supervision remains available even when positive samples are scarce. We elaborate on each component below. The overall pipeline and the complete training procedure are summarized in Figure~\ref{fig:drift_pipeline} and Algorithm~\ref{alg:drift}, respectively.

\subsection{Curriculum Teacher Construction}
\label{sec:curriculum}

DRIFT organizes training into a warm-up stage and a mixed-optimization stage that share a single self-teacher, played by the model itself, whose supervision is provided as privileged information (a success sibling, denoted $\mathrm{sib}^{+}$).

\begin{algorithm}[t]
\caption{DRIFT Training}
\label{alg:drift}
\small
\setlength{\baselineskip}{0.92\baselineskip}
\setlength{\itemsep}{0pt}
\begin{algorithmic}[1]
\Require Dataset $\mathcal{D}$, policy $\pi_\theta$, rollout size $n$, reward threshold $\theta_r$, warm-up steps $T_{\mathrm{warm}}$, buffer epoch $k$, capacity $b$
\Ensure Optimized policy $\pi_\theta$
\State Initialize pass-rate table $\{\bar{r}(x)\}$ and success buffers $\mathcal{B}(x)\leftarrow\emptyset$ for all $x\in\mathcal{D}$
\For{training step $t=1,\ldots,T$}
    \State Sample a batch of prompts $\{x\}$ from $\mathcal{D}$
    \State Collect rollouts $\{y_i\}$ and rewards $\{r_i\}$ by sampling $n$ responses per prompt
    \State Compute GRPO advantages $\{A_i^{\mathrm{GRPO}}\}$ jointly over all rollouts in the batch
    \State $\mathcal{L}\leftarrow 0$
    \For{each prompt $x$ in the batch}
        \State Let $\mathcal{Y}(x)=\{y_i\}_{i\in\mathcal{I}(x)}$ be its rollout group with rewards $\{r_i\}_{i\in\mathcal{I}(x)}$
        \State Update $p(x)$ by EMA of the group pass rate \Comment{Sec.~\ref{sec:routing}}
        \State Set routing weights $\gamma$ from $p(x)$
        \State Select $\mathrm{sib}^{+}$ from $\mathcal{Y}(x)$, or from $\mathcal{B}(x)$ if no positive rollout and $\mathrm{epoch}>k$ \Comment{Secs.~\ref{sec:curriculum},~\ref{sec:buffer}}
        \State Update $\mathcal{B}(x)$ with successful rollouts by reward-based replacement \Comment{Sec.~\ref{sec:buffer}}
        \For{each $i\in\mathcal{I}(x)$}
            \If{$t\le T_{\mathrm{warm}}$} \Comment{Stage 1: SDPO warm-up}
                \State $\mathcal{L}\leftarrow\mathcal{L}+\mathcal{L}_{\mathrm{SDPO}}(y_i,\mathrm{sib}^{+})$
            \ElsIf{$r_i<\theta_r$} \Comment{Stage 2: incorrect $\rightarrow$ SDPO}
                \State $\mathcal{L}\leftarrow\mathcal{L}+\mathcal{L}_{\mathrm{SDPO}}(y_i,\mathrm{sib}^{+})$
            \Else \Comment{Stage 2: correct $\rightarrow$ rhythm-GRPO}
                \State Compute token weights $\{M^{\mathrm{rhy}}_{i,t}\}$ from teacher--student log-probs and entropy dynamics \Comment{Sec.~\ref{sec:rhythm}}
                \State $\mathcal{L}\leftarrow\mathcal{L}-\gamma\, \mathcal{J}^{\mathrm{rhythm}}_{\mathrm{GRPO}}(y_i;\,A_i^{\mathrm{GRPO}},\,M^{\mathrm{rhy}}_{i})$
            \EndIf
        \EndFor
    \EndFor
    \State Update $\theta$ by minimizing $\mathcal{L}$
\EndFor
\end{algorithmic}
\end{algorithm}

\paragraph{Stage 1: Entropy-based teacher selection for the SDPO warm-up.}
The warm-up stage is SDPO-dominated and differs from standard SDPO only in how the success sibling is chosen. Rather than taking the first sample with $\text{reward}\ge\text{threshold}$, we use a ``reward-first, entropy-second'' rule: among samples whose reward lies in $[\,r_{\max}-\delta,\;r_{\max}\,]$, we select the highest-entropy one ($\delta$ a tolerance). A high-entropy teacher preserves exploration and diversity early in training, while this stage simultaneously accumulates successful trajectories for the success buffer (Section~\ref{sec:buffer}).

\paragraph{Stage 2: Mixed optimization of SDPO and GRPO.}
In the mixed stage we route each sample by correctness: $\text{reward}<\theta_r$ is negative, otherwise positive. \textbf{Negative samples take the SDPO branch}, with $\mathrm{sib}^{+}$ as teacher---now chosen by largest reward, ties broken by shortest response length---while \textbf{positive samples take the rhythm-GRPO branch}. 

Positive and negative samples exhibit a natural \textbf{asymmetry} under both policy-gradient and distillation updates. At position $t$, let $p=\pi_\theta(\cdot\mid x,y_{i,<t})$ be the student's next-token distribution, $a$ the sampled token, and $q=\pi_\theta(\cdot\mid x,f,y_{i,<t})$ the stop-gradient teacher; since $\nabla_{z}\log p_a=e_a-p$ (with $e_a$ the one-hot of $a$), the unclipped GRPO update in logit space is $\Delta z\propto A_i(e_a-p)$. On a negative sample ($A_i<0$), this update only removes probability from $a$ and redistributes it over the remaining tokens in proportion to $p$, independently of the correct token, so the learning signal is poorly utilized; we therefore route negatives to SDPO, which descends $\mathrm{JSD}(p\|q)$ toward the privileged (correct) teacher $\mathrm{sib}^{+}$, providing an explicit corrective target that GRPO lacks. On a positive sample, distilling from a distinct correct sibling ($p\neq q$) pulls the policy toward a predetermined correct solution, shrinking solution diversity and injecting a distribution-misaligned gradient that destabilizes training; by contrast, the GRPO update $A_i(e_a-p)$ with $A_i>0$ raises $p_a$ for the sample's own correct token and thereby reinforces the verified correct answer. We provide a fuller analysis of this branch assignment in Appendix~\ref{sec:app-branch-geometry}.

Writing the two branches together, we obtain the overall objective of DRIFT in the mixed stage:
\begin{equation*}
\label{eq:master}
\min_{\theta}\;\;\mathcal{L}(\theta):=
\mathbb{E}_{y_i\sim\pi_{\theta}(\cdot\mid x)}
\left[
\underbrace{
{\color{cIncorrect}\mathbf{1}_i^{-}}\,
\mathcal{L}_{\mathrm{SDPO}}(y_i,f_i)
}_{\text{Self-Distillation}\;(\S\ref{sec:curriculum},\,\S\ref{sec:buffer})}
-
\underbrace{
{\color{cGamma}\gamma_i}\,
\mathbf{1}_i^{+}\,
\mathcal{J}_{\mathrm{GRPO}}^{\mathrm{rhythm}}(y_i)
}_{\text{Difficulty-Routed RL}\;(\S\ref{sec:routing},\,\S\ref{sec:rhythm})}
\right],
\end{equation*}
where the two branches expand respectively as
\begin{align*}
\mathcal{L}_{\mathrm{SDPO}}(y_i,f_i)&:=\sum_{t=1}^{|y_i|}
\mathrm{JSD}\!\left(\pi_\theta(\cdot\mid x,y_{i,<t})\,\big\Vert\,
\sg\!\left[\pi_\theta(\cdot\mid x,f_i,y_{i,<t})\right]\right), \\[4pt]
\mathcal{J}_{\mathrm{GRPO}}^{\mathrm{rhythm}}(y_i)&:=\sum_{t=1}^{|y_i|}
{\color{cRhythm}M^{\mathrm{rhy}}_{t}}\cdot
\min\!\Big(\rho_{i,t}(\theta)\,A_i,\;
\mathrm{clip}\!\big(\rho_{i,t}(\theta),1\pm\epsilon\big)\,A_i\Big).
\end{align*}

This objective uses three colored terms to mark the division of labor among the following subsections: ${\color{cIncorrect}\mathbf{1}_i^{-}}$ triggers the SDPO privileged-correction branch, whose teacher $f_i$ is provided jointly by the teacher-selection rule of this section and the success buffer of Section~\ref{sec:buffer}; ${\color{cGamma}\gamma_i}$ is the problem-level difficulty-routing weight, given by Section~\ref{sec:routing}; and ${\color{cRhythm}M^{\mathrm{rhy}}_{t}}$ is the token-level rhythm-gating weight, given by Section~\ref{sec:rhythm}. The following three subsections refine these three quantities in turn.

\subsection{Difficulty Routing}
\label{sec:routing}

This section refines problem-level signal allocation through ${\color{cGamma}\gamma_i}$ in the objective. Sample difficulty affects policy optimization non-monotonically: overly hard samples rarely yield positive, generalizable trajectories, while overly easy samples produce homogeneous rewards that mainly reinforce surface-level patterns~\citep{cheng2026sampledifficulty,baroian2026prompt,chen2026unlearnability}. We further observe that on multiple-choice datasets, models can guess correct answers via flawed reasoning, further miscalibrating the signal from easy samples. Medium-difficulty samples therefore benefit most from GRPO. Accordingly, in the second curriculum stage, we down-weight positive GRPO samples that are overly easy or overly hard, keeping full strength only for medium-difficulty cases; this weight is exactly ${\gamma_i}$.

\paragraph{Pass-rate estimation.}
We use each problem's pass rate to estimate difficulty. Given $n$ rollouts, the instantaneous pass rate is
\begin{equation*}
p_{\mathrm{now}} = \frac{n_{\mathrm{correct}}}{n}.
\end{equation*}
Since $n$ is usually small (8 or 16), this estimate can fluctuate substantially across epochs even for a fixed model. We therefore smooth it using an exponential moving average of the historical pass rate:
\begin{equation*}
p = \alpha\, p_{\mathrm{past}} + (1-\alpha)\, p_{\mathrm{now}}.
\end{equation*}

\paragraph{Difficulty bins and weights.}
Based on the smoothed pass rate $p$, we assign
\begin{equation*}
{gamma_i} =
\begin{cases}
\gamma_{\mathrm{easy}}, & p > p_{\mathrm{easy}}, \\
1, & p_{\mathrm{hard}} < p < p_{\mathrm{easy}}, \\
\gamma_{\mathrm{hard}}, & p < p_{\mathrm{hard}}.
\end{cases}
\end{equation*}
The thresholds $p_{\mathrm{hard}}$ and $p_{\mathrm{easy}}$ are not chosen heuristically, but are instead derived from the pass rate at which a binary-reward group's gradient signal vanishes.

\begin{lemma}[GRPO group degeneration]
\label{lem:grpo-deadzone}
Assume $n$ i.i.d.\ binary rewards with pass rate $p$. The probability that a group is all-correct or all-incorrect, and hence yields zero GRPO gradient, is
\begin{equation*}
P_{\mathrm{deg}}(p)=(1-p)^n+p^n .
\end{equation*}
Defining the informative band $\{p:P_{\mathrm{deg}}(p)<\tau\}$ and solving the hard-side boundary $(1-p)^n=\tau$ gives
\begin{equation*}
p_{\mathrm{hard}}=1-\tau^{1/n}\approx\frac{-\ln\tau}{n},\qquad p_{\mathrm{easy}}=1-p_{\mathrm{hard}},
\end{equation*}
where the approximation follows from the large-$n$/small-$p$ expansion $e^{-np}=\tau$.
\end{lemma}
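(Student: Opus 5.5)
The plan is to prove the three claims in order: the degeneration probability, the exact hard-side threshold, and the small-$p$ approximation together with the symmetric easy-side threshold.

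\textbf{Degeneration probability.} Under the i.i.d.\ Bernoulli$(p)$ model, the all-incorrect event has probability $(1-p)^n$ and the all-correct event has probability $p^n$. For $n\ge 1$ these events are disjoint, so $P_{\mathrm{deg}}(p)=(1-p)^n+p^n$.

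\textbf{Degenerate groups give zero gradient.} I then check that either event really kills the GRPO gradient. If all $r_i$ are equal, every $r_i$ equals the group mean, so each normalized advantage $A_i=(r_i-\bar r)/(\sigma+\varepsilon)$ is zero. The clipped surrogate is then identically zero in $\theta$, and so is its gradient. I would state this for the per-group normalization described in the preliminaries.

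\textbf{Shape of $P_{\mathrm{deg}}$.} The function is symmetric under $p\mapsto 1-p$. It is convex for $n\ge 2$ and minimized at $p=1/2$, where it equals $2^{1-n}$. Hence, for $\tau\in(2^{1-n},1)$, the informative band $\{P_{\mathrm{deg}}<\tau\}$ is an open interval symmetric about $1/2$. This symmetry is what justifies $p_{\mathrm{easy}}=1-p_{\mathrm{hard}}$.

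\textbf{Hard-side boundary.} Here I would be explicit that the statement solves the one-sided equation $(1-p)^n=\tau$, dropping the $p^n$ term. On the hard side $p$ is small, so $p^n\le p\,(1-p)^{n-1}\cdot(\text{tiny})$ and is negligible. More precisely, $p^n$ is exponentially small compared with $\tau$ whenever $p\le 1/2$ and $\tau\gg 2^{-n}$. Solving gives $p_{\mathrm{hard}}=1-\tau^{1/n}$ directly. The easy-side threshold then follows by the symmetry $p\mapsto 1-p$, which swaps the two terms.

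\textbf{Approximation.} I would write $\tau^{1/n}=\exp\bigl(\tfrac{\ln\tau}{n}\bigr)=1+\tfrac{\ln\tau}{n}+O\bigl(\tfrac{(\ln\tau)^2}{n^2}\bigr)$, which gives $p_{\mathrm{hard}}\approx -\ln\tau/n$. Equivalently, $(1-p)^n=e^{n\ln(1-p)}\approx e^{-np}$ for small $p$, and setting $e^{-np}=\tau$ reproduces the expansion quoted in the statement.

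\textbf{Main obstacle.} The delicate point is precision about the exact band versus the one-sided threshold. The exact boundary solves $(1-p)^n+p^n=\tau$, not $(1-p)^n=\tau$. I would bound the discrepancy to show the one-sided solution is a very close approximation. Because $P_{\mathrm{deg}}\ge(1-p)^n$, the one-sided root lies slightly below the exact boundary. By the mean value theorem, the gap is at most $p_{\mathrm{hard}}^n/\bigl(n(1-p_{\mathrm{hard}})^{n-1}\bigr)$, which is negligible for the stated $n\in\{8,16\}$ and moderate $\tau$. The lemma should therefore be read as defining the thresholds via this dominant-term equation.
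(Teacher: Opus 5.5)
Your proposal is correct. Its core matches the paper's proof: the paper also takes $k\sim\mathrm{Binomial}(n,p)$ and reads off $P_{\mathrm{deg}}=\Pr[k=0]+\Pr[k=n]$. It likewise notes that an all-equal group has zero standard deviation and, by convention, zero advantage. It then asserts, without proof, symmetry about $1/2$ and monotone growth toward the endpoints.

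You diverge on the threshold part. The paper never derives $1-\tau^{1/n}$ or $-\ln\tau/n$. Instead it fixes $n=8$ and an $80\%$ informativeness target, solves the exact two-term equation $1-(1-p)^8-p^8=0.8$ numerically (getting $p\approx0.18$ and $0.82$), and rounds inward to $0.2$ and $0.8$. You instead prove the formulas as stated:
\begin{itemize}
\item convexity shows the band is a symmetric open interval, which also supplies the monotonicity the paper only asserts;
\item the dominant-term equation is solved in closed form;
\item the approximation follows from expanding $\tau^{1/n}=e^{(\ln\tau)/n}$;
\item you explicitly reconcile the statement's one-sided equation $(1-p)^n=\tau$ with the true band edge.
\end{itemize}
That last mismatch is real in the paper: the statement uses the one-sided equation while the appendix solves the two-sided one, and the paper never addresses the difference. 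The paper's route produces exactly the numbers used in training. Yours actually justifies the lemma as written and quantifies how little the dropped $p^n$ term matters.

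One small correction to your discrepancy bound. Let $p_{\mathrm{hard}}=1-\tau^{1/n}$ and let $p^\ast$ be the exact root. The mean value theorem gives $p^\ast-p_{\mathrm{hard}}=(p^\ast)^n/\bigl(n(1-\xi)^{n-1}\bigr)$ for some $\xi\in(p_{\mathrm{hard}},p^\ast)$. So the valid upper bound is $(p^\ast)^n/\bigl(n(1-p^\ast)^{n-1}\bigr)$. The expression you wrote, evaluated at $p_{\mathrm{hard}}$, is actually a lower bound on the gap, since both the numerator and the factor $1/(1-\xi)^{n-1}$ increase when you move from $p_{\mathrm{hard}}$ to $p^\ast$. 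The two agree to leading order: for $n=8$, $\tau=0.2$ the gap is about $6\times10^{-7}$. Your conclusion therefore stands.
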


The proof and the reason of selecting thresholds are deferred to Appendix~\ref{sec:app-grpo-deadzone}.

\subsection{Rhythm-Gated Structured Exploration}
\label{sec:rhythm}

This section defines the token weight ${\color{cRhythm}M^{\mathrm{rhy}}_{t}}$ in the objective.
Standard GRPO applies one sequence-level advantage to every token, so low-semantic positions receive the same exploration signal as decision-critical ones.
On the medium-difficulty GRPO branch we therefore apply a \textbf{rhythm-gated rebellious bonus}: amplify a token only when a verified student trajectory exhibits a teacher-relative innovation at a position where the privileged teacher resolves uncertainty faster than the student.

\begin{figure}[t]
    \centering
    \includefig[width=\linewidth]{Figures/visual_physics.png}
    \caption{Rhythm-gating visualization on a correct Physics rollout
    (Qwen3-8B trained with DRIFT on Physics).
    {Red}: tokens kept by the gate,
    intensity $\propto b_t^{\mathrm{reb}}g_t^{\mathrm{rhythm}}$.
    {Blue}: rebellious exceedances suppressed by the gate,
    $\propto b_t^{\mathrm{reb}}$.
    }
    \label{fig:rhythm-vis-physics}
\end{figure}

Figure~\ref{fig:rhythm-vis-physics} previews this effect on a Physics rollout: blue marks rebellious exceedances on connectives and formatting that the gate suppresses, while only the red positions---true reasoning turning points---are retained as structurally critical tokens.
Additional Tool-Use visualizations are given in Appendix~\ref{sec:app-rhythm-vis}.
We construct the two factors below.

\paragraph{Reward-validated innovation: teacher-relative rebellious bonus.}
Let $\ell_t^S$ and $\ell_t^T$ denote the log-probabilities assigned by the student and the privileged teacher, respectively, to the actually sampled token at response position $t$. Their difference is the sampled-token log-likelihood ratio
\[
\ell_t^S-\ell_t^T
=
\log\frac{\pi_S(y_t\mid x,y_{<t})}
{\pi_T(y_t\mid x,f,y_{<t})}.
\]
Writing $(x)_{+}=\max(x,0)$, we define the bounded rebellious bonus
\begin{equation*}
b_t^{\mathrm{reb}}=\tanh\!\big((\ell_t^S-\ell_t^T)_{+}\big)\in[0,1).
\end{equation*}

The quantity $\ell_t^S>\ell_t^T$ alone does not imply that the student is globally better than the teacher; it only identifies a token locally preferred by the student policy. Crucially, however, the bonus is used only on the positive GRPO branch, where the entire rollout has passed the verifier. The outcome reward therefore supplies posterior validation of this local deviation. We consequently interpret $b_t^{\mathrm{reb}}$ as a \emph{reward-validated policy innovation}: it measures how strongly a successful student trajectory departs from its privileged reference, without treating the teacher as an insurmountable upper bound. The $\tanh$ mapping makes this innovation score monotone and bounded without an additional clipping hyperparameter.

\paragraph{Uncertainty-resolution lag: rhythm gate.}
A successful innovation is not necessarily important: higher student likelihood may occur on punctuation, connectives, formatting tokens, or repeated phrases. We therefore gate it using the temporal structure of predictive uncertainty. Let $H_t^S$ and $H_t^T$ be the next-token entropies of the student and privileged teacher at position $t$. 
For a local window of length $W$, define
\[
\bar H^{*}_{\mathrm{past}}(t)=\frac{1}{W}\sum_{j=t-W}^{t-1}H_j^{*},
\qquad
\bar H^{*}_{\mathrm{future}}(t)=\frac{1}{W}\sum_{j=t}^{t+W-1}H_j^{*},
\]
for $*\in\{S,T\}$, and 
\begin{equation*}
\mathrm{Drop}_*(t)=\big(\bar H^{*}_{\mathrm{past}}(t)-\bar H^{*}_{\mathrm{future}}(t)\big)_+.
\end{equation*}

\begin{lemma}[Local-slope interpretation and offset invariance]
\label{lem:rhythm-slope}
Fix a time $t$, and let $\mathcal{N}(t)$ be the index set spanned by the two
windows that define $\mathrm{Drop}_H(t)$ (of total width $2W$). Suppose the
entropy trajectory is \emph{locally affine at $t$}: there is a neighborhood of
$t$ containing $\mathcal{N}(t)$, a local slope $c=c(t)$, and an offset $a=a(t)$
such that
\[
H_j = a + c\,j \qquad \text{for all } j \in \mathcal{N}(t).
\]
Then
\[
\mathrm{Drop}_H(t) = W\cdot (-c)_+ .
\]
Moreover, for any constant confidence offset $d$,
$\mathrm{Drop}_{H+d}(t) = \mathrm{Drop}_H(t)$.
\end{lemma}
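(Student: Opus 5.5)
The plan is to compute the two window averages directly under the local affine hypothesis and then subtract. With $H_j = a + c\,j$ for all $j\in\mathcal{N}(t)=\{t-W,\dots,t+W-1\}$, the past window gives
\[
\bar H_{\mathrm{past}}(t)=\frac{1}{W}\sum_{j=t-W}^{t-1}(a+cj)=a+c\,\Big(t-\frac{W+1}{2}\Big),
\]
since the indices $t-W,\dots,t-1$ have arithmetic mean $t-\frac{W+1}{2}$. Likewise the future window gives
\[
\bar H_{\mathrm{future}}(t)=\frac{1}{W}\sum_{j=t}^{t+W-1}(a+cj)=a+c\,\Big(t+\frac{W-1}{2}\Big).
\]
Only the arithmetic-progression mean is needed here. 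Subtracting, the offset $a$ and the $ct$ terms cancel, and
\[
\bar H_{\mathrm{past}}(t)-\bar H_{\mathrm{future}}(t)=-c\,W .
\]
Applying $(\cdot)_+$ and using $W>0$, so that $(-cW)_+=W(-c)_+$, yields $\mathrm{Drop}_H(t)=W\cdot(-c)_+$.

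For offset invariance I would not use the affine hypothesis at all. The map $H\mapsto \bar H_{\mathrm{past}}(t)-\bar H_{\mathrm{future}}(t)$ is linear. Each window average of the constant sequence $d$ equals $d$, because both windows have the same length $W$. So replacing $H_j$ by $H_j+d$ adds $d$ to both averages, and their difference is unchanged. Applying $(\cdot)_+$ then gives $\mathrm{Drop}_{H+d}(t)=\mathrm{Drop}_H(t)$ for an arbitrary entropy trajectory, which in particular covers the affine case. I would also remark that in the affine case this is visible directly: the shift changes $a$ but not $c$.

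There is no genuine obstacle. The only point needing care is the index bookkeeping: the two windows $\{t-W,\dots,t-1\}$ and $\{t,\dots,t+W-1\}$ are disjoint, adjacent, and of equal length. Equal length is exactly what makes the offset cancel, and adjacency makes their centers differ by exactly $W$, which produces the factor $W$. I would also note that the hypothesis is needed only on $\mathcal{N}(t)$, so the conclusion is purely local, as stated.
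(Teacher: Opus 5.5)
Your proof is correct and follows essentially the paper's argument. Offset invariance comes from the equal-length windows shifting both means by $d$, and the slope readout comes from the two adjacent window means of an affine sequence differing by $-cW$; the paper phrases this as the window centers differing by $W$ (after first using offset invariance to set $a=0$), while you compute the centers $t-\frac{W+1}{2}$ and $t+\frac{W-1}{2}$ explicitly and let $a$ cancel directly.
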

The proof is deferred to Appendix~\ref{sec:app-rhythm-slope}.

Thus $\mathrm{Drop}$ estimates the positive local uncertainty-resolution rate rather than absolute entropy level, and is invariant to a static confidence gap induced by privileged context.
We define the student's \emph{uncertainty-resolution lag} and the rhythm gate as
\begin{equation*}
R_t=\big(\mathrm{Drop}_T(t)-\mathrm{Drop}_S(t)\big)_{+}, \quad
g_t^{\mathrm{rhythm}}=\tanh(R_t)\in[0,1).
\end{equation*}
We empirically set the window size to $W=10$ for all experiments.

\paragraph{Exploration weight.}
Combining the two factors, the token weight actually used on the GRPO branch is
\begin{equation*}
\label{eq:rhy-weight}
{\color{cRhythm}M^{\mathrm{rhy}}_{i,t}}
=
1+\beta_i \,
\underbrace{b_{i,t}^{\mathrm{reb}}}_{\substack{\text{Reward-validated}\\\text{innovation}}}
\underbrace{g_{i,t}^{\mathrm{rhythm}}}_{\substack{\text{Uncertainty-resolution}\\\text{lag}}},
\end{equation*}
where $\beta_i=1$ if and only if sample $i$ lies in the medium-difficulty region of Section~\ref{sec:routing}, and $\beta_i=0$ otherwise. The product implements a logical conjunction: innovation without resolution lag may be a surface-level deviation, while resolution lag without innovation supplies no new successful behavior to reinforce. Additional credit is assigned only when a verified student innovation occurs at a privileged decision boundary on a problem near the capability frontier.

\begin{remark}[bounded amplification]
    Since $\beta_i\in\{0,1\}$ and $b_{i,t}^{\mathrm{reb}},g_{i,t}^{\mathrm{rhythm}}\in[0,1)$, we have $1\le {M^{\mathrm{rhy}}_{i,t}}<2$. Thus, on a positive-advantage rollout, rhythm gating preserves the update direction and only adds a bounded emphasis to verified innovations. Difficulty routing selects {which problems} lie near the capability frontier, while rhythm gating identifies {where} their successful but unstable decisions emerge.
\end{remark}

\subsection{Success Buffer}
\label{sec:buffer}

Finally, we describe how to keep the privileged self-teacher for the ${\color{cIncorrect}\mathbf{1}_i^{-}}$ SDPO branch consistently available and progressively stronger, especially when positive samples are scarce. Inspired by RLEP~\citep{zhang2025rlep}, we introduce a success buffer into OPD. The key idea is to reuse rollouts that have received positive verification signals, so that the model can continually reinforce effective reasoning paths rather than relying on fresh exploration at every update. During sampling, successful rollouts are stored in a per-sample buffer and later replayed as $\mathrm{sib}^{+}$ when the current rollout group contains no positive sample. Different from a static offline replay buffer, our buffer is dynamically updated: as the policy improves, higher-reward successful trajectories replace lower-quality historical ones, providing an implicit privileged signal that strengthens throughout training.

For each training sample $x$, we maintain a small success buffer
\[
\mathcal{B}(x)=\{(y_j^+, r_j)\}_{j=1}^{m_x},
\qquad
m_x \le b,
\]
where $y_j^+$ is a historical successful rollout and $r_j$ is its reward. The buffer is only populated during the first $k$ epochs. Starting from epoch $k+1$, if the current rollout group
\[
\mathcal{Y}(x)=\{y_1,\ldots,y_n\}
\]
contains no successful sample, i.e.,
\[
\max_{y \in \mathcal{Y}(x)} r(y) \le 0,
\]
we randomly draw one entry from $\mathcal{B}(x)$ and use it as $\mathrm{sib}^{+}$.

The update rule is simple. If the buffer is not full, a new successful rollout is appended directly; once it contains $b$ entries, each new success evicts the earliest inserted one. We set $b=3$ and uniformly sample one stored sibling whenever replay is needed.

A natural concern is that historical successes in the buffer may exhibit distribution shift relative to trajectories induced by the current policy. Unlike importance-sampling-based methods, however, buffered trajectories in DRIFT neither enter the policy-gradient estimator as off-policy samples nor directly reweight gradient updates through likelihood ratios. Instead, a retrieved success is supplied only as privileged context to the stop-gradient self-teacher, which then produces a conditional token-level distribution over the current student's on-policy prefix. The buffer's influence is therefore mediated by the teacher's contextual inference and subsequently translated into dense token-level supervision, rather than being propagated as an unfiltered trajectory-level weight. This mediation acts as an implicit soft filter: historical information affects optimization only insofar as the current teacher can interpret and adapt it to the present context.

\section{Experiments}

\subsection{Experimental Setup}

\paragraph{Models and training.}
We conduct experiments on three instruction-tuned models, including Qwen3-8B~\citep{qwen3}, Qwen3-4B, and OLMo3-7B~\citep{olmo2025olmo3}, covering both different scales within the same model family and models from different families. Qwen3-8B serves as our primary model; unless otherwise noted, all ablation studies and training-dynamics analyses beyond the main performance comparison are conducted on Qwen3-8B. All experiments are trained for more than400 steps on 8 NVIDIA H200 GPUs. 
Unlike prior work that reports results in terms of wall-clock time~\citep{hubotter2026reinforcement,li2026srpo}, we follow SC-SDPO~\citep{liu2026scsdpo} and report results in terms of training steps, since optimization steps are independent of hardware configuration and enable reproducible comparison across different compute environments.

\paragraph{Datasets.}
Following the evaluation setting of SDPO, we conduct experiments on five benchmarks: Chemistry, Physics, Biology, Materials, and Tool Use. The first four benchmarks are drawn from the reasoning subsets of SciKnowEval~\citep{feng2024sciknoweval}, covering undergraduate-level scientific reasoning across multiple domains. The Tool Use benchmark assesses the model's ability to convert a user query and a given tool specification into the correct tool invocation, following ToolAlpaca~\citep{tang2023toolalpaca}. Following SDPO, we split each benchmark into training and test sets to measure in-domain generalization.

\paragraph{Baselines.}
We compare DRIFT with six baselines: GRPO~\citep{shao2024deepseekmath}, SDPO~\citep{hubotter2026reinforcement}, SRPO~\citep{li2026srpo}, DistIL~\citep{agrawal2026distil}, SC-SDPO~\citep{liu2026scsdpo}, and PGPO~\citep{wang2026pgpo}. We reproduce GRPO and SDPO using the implementation framework provided by SDPO~\citep{hubotter2026reinforcement}, with the default hyperparameter settings kept unchanged. For SRPO, DistIL, SC-SDPO, and PGPO, we use the results reported in their corresponding papers~\citep{li2026srpo, agrawal2026distil, liu2026scsdpo, wang2026pgpo}. When comparing with reported numbers, we follow the same benchmark and evaluation protocol whenever applicable.

\subsection{Main Results}

\paragraph{Main results.}
Table~\ref{tab:main_qwen3_8b} reports the mean@16 accuracy of Qwen3-8B across four scientific reasoning tasks and tool use. DRIFT achieves the best overall performance, with an average accuracy of 79.5\%, outperforming the strongest baseline SRPO by 2.1 points and improving over the Qwen3-8B starting point by 30.0 points. More importantly, DRIFT performs consistently across all tasks, ranking within the top two in every category, rather than improving one domain at the cost of another.

This robustness comes from difficulty-aware routing. On hard samples, where group-level rewards in GRPO can collapse into coarse trajectory suppression, DRIFT routes optimization to the SDPO branch and uses $\mathrm{sib}^{+}$ to provide finer token-level correction. On easier samples, difficulty weighting prevents overly superficial reinforcement. As a result, DRIFT maintains a better balance between exploration, correction, and stable policy improvement across heterogeneous tasks.
\begin{table}[htb]
\centering
\small
\renewcommand{\arraystretch}{1.25}
\setlength{\tabcolsep}{10pt}
\begin{tabular}{lcccccc}
\toprule
Method & Biology & Chemistry & Material & Physics & Tool Use & Average \\
\midrule
\textbf{Qwen3-8B} & 30.8 & 41.2 & 58.9 & 59.2 & 57.5 & 49.5 \\
+ SDPO$^{\ast}$ & 56.8 & 80.9 & 78.4 & 75.6 & 68.5 & 72.0 \\
+ SDPO & 64.8 & 78.9 & 76.1 & 72.7 & 67.7 & 72.0 \\
+ GRPO$^{\ast}$ & 59.9 & 74.5 & 77.1 & 72.7 & 65.7 & 70.0 \\
+ GRPO & 47.4 & 65.6 & 73.5 & 60.6 & 67.7 & 63.0 \\
+ SRPO$^{\ast}$ & \underline{72.8} & \textbf{83.0} & \textbf{81.5} & 78.4 & \underline{71.2} & \underline{77.4} \\
+ DistIL$^{\ast}$ & 66.6 & 80.8 & 76.2 & \underline{80.8} & -- & -- \\
+ SC-SDPO$^{\ast}$ & 65.4 & 80.6 & 79.3 & \textbf{81.6} & 67.3 & 74.8 \\
+ PGPO$^{\ast}$ & 61.3 & 77.6 & 78.7 & 77.6 & -- & -- \\
\midrule
\textbf{+ DRIFT (Ours)} & \textbf{74.4}\imp{9.6} & \underline{82.0}\imp{3.1} & \underline{81.4}\imp{5.3} & 80.5\imp{7.8} & \textbf{79.2}\imp{11.5} & \textbf{79.5}\imp{7.5} \\
\bottomrule
\end{tabular}
\caption{Comparison of different post-training methods based on Qwen3-8B, reported as mean@16 accuracy (\%). Results marked with $\ast$ are cited from prior work. \textbf{Bold} and \underline{underlined} values indicate the best and second-best results, respectively. {\color{impgreen}Green subscripts} denote absolute gains over the reproduced SDPO baseline.}
\label{tab:main_qwen3_8b}
\end{table}

DRIFT's effectiveness stems from balancing exploration with structured knowledge learning. The self-distillation branch turns successful siblings into supervision for reusable knowledge such as tool protocols, domain facts, and answer formats, while rhythm-gated exploration on the policy-gradient branch prevents premature convergence to specific trajectories. Together they improve both tool-use and STEM performance. Gains are larger on tool use because correctness is defined by schemas, \texttt{Action}/\texttt{Action Input} formats, and argument constraints, so successful demonstrations transfer cleanly across samples; STEM knowledge is more problem-specific and less reusable, yielding more moderate gains.

\paragraph{Generalization across model families and scales.}
Table~\ref{tab:main_qwen4b_olmo7b} reports results on Qwen3-4B and Olmo3-7B. DRIFT again attains the highest average accuracy (74.6\% and 72.8\%), and its advantage on tool use remains the largest (leading the second-best by 7.0 and 7.3 points, respectively). We highlight two observations. First, the gain amplifies as the starting point weakens: on Olmo3-7B (base 30.5\%), DRIFT improves by 42.3 points and leads online GRPO by 12 points, because when positive samples are scarce, the privileged replay from the success buffer fills exactly the void left by GRPO's advantage collapse. Second, the lead over the strongest baseline SRPO widens with scale (+0.4 at 4B, +2.1 at 8B), consistent with the observation that self-teaching ability emerges with increasing model scale.

\begin{table}[htb]
\centering
\small
\renewcommand{\arraystretch}{1.25}
\setlength{\tabcolsep}{10pt}
\begin{tabular}{lcccccc}
\toprule
Method & Biology & Chemistry & Material & Physics & Tool Use & Average \\
\midrule
\textbf{Qwen3-4B} & 30.8 & 43.6 & 61.2 & 59.8 & 57.9 & 50.7 \\
+ SDPO$^{\ast}$ & 54.0 & 77.3 & 74.3 & 66.7 & 61.1 & 66.7 \\
+ GRPO$^{\ast}$ & 55.5 & 78.3 & 80.1 & 71.9 & 62.9 & 69.7 \\
+ SRPO$^{\ast}$ & \textbf{65.8} & \textbf{82.7} & \underline{81.3} & \textbf{74.0} & \underline{67.0} & \underline{74.2} \\
\textbf{+ DRIFT (Ours)} & \underline{64.3}\imp{10.3} & \underline{79.6}\imp{2.3} & \textbf{82.8}\imp{8.5} & \underline{72.2}\imp{5.5} & \textbf{74.0}\imp{12.9} & \textbf{74.6}\imp{7.9} \\
\midrule
\textbf{Olmo3-7B} & 16.2 & 22.8 & 36.7 & 37.7 & 39.3 & 30.5 \\
+ SDPO$^{\ast}$ & 52.8 & 80.0 & \underline{79.1} & 66.1 & 62.1 & 68.0 \\
+ GRPO$^{\ast}$ & 49.8 & 57.5 & 73.5 & 62.7 & 60.6 & 60.8 \\
+ SC-SDPO$^{\ast}$ & \underline{55.9} & \underline{81.5} & \underline{79.1} & 71.1 & \underline{62.9} & \underline{70.1} \\
+ DistIL$^{\ast}$ & 55.3 & 81.0 & 76.9 & \textbf{74.5} & -- & -- \\
\textbf{+ DRIFT (Ours)} & \textbf{61.3}\imp{8.5} & \textbf{82.0}\imp{2.0} & \textbf{79.2}\imp{0.1} & \underline{71.3}\imp{5.2} & \textbf{70.2}\imp{8.1} & \textbf{72.8}\imp{4.8} \\
\bottomrule
\end{tabular}
\caption{Comparison of different post-training methods on Qwen3-4B and Olmo3-7B, reported as mean@16 accuracy (\%). We use the same notation as in Table~\ref{tab:main_qwen3_8b}.}
\label{tab:main_qwen4b_olmo7b}
\end{table}

\subsection{Training Dynamics}
\label{sec:training_dynamics}

\paragraph{Training dynamics.}
Figures~\ref{fig:training_dynamics} and~\ref{fig:routing_dynamics} summarize the evolution of DRIFT during training. Actor entropy rises during early exploration and stabilizes after approximately 200 steps, while response length remains bounded. The success buffer grows rapidly before approaching saturation, with intermittent fallback hits indicating selective reuse of historical successes. At the same time, the easy-problem fraction gradually increases, the medium group remains dominant, and the hard fraction declines before stabilizing. Together, these trends show that DRIFT progressively shifts from broad exploration toward stable refinement while continuously adapting its routing decisions to the model's evolving capability.
\begin{figure}[htbp]
    \centering
    \begin{subfigure}[t]{0.48\textwidth}
        \centering
        \includegraphics[width=\linewidth]{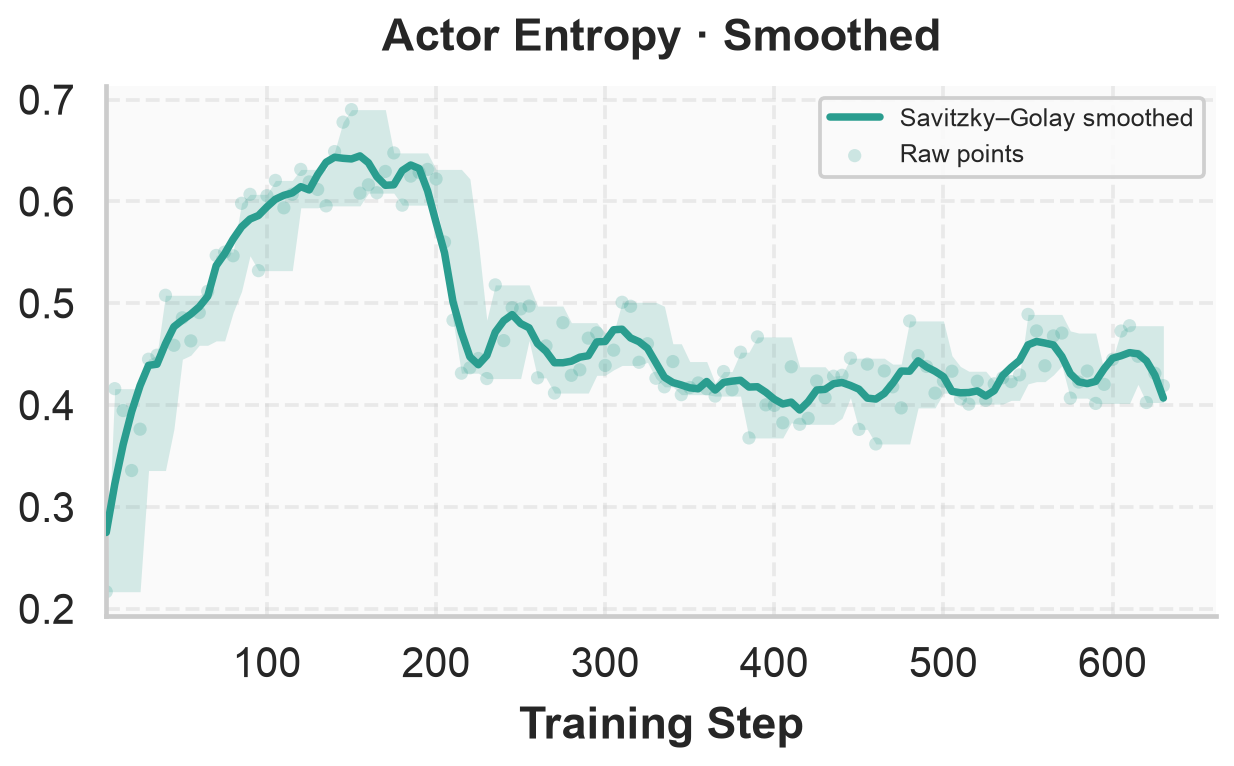}
    \end{subfigure}
    \hfill
    \begin{subfigure}[t]{0.48\textwidth}
        \centering
        \includegraphics[width=\linewidth]{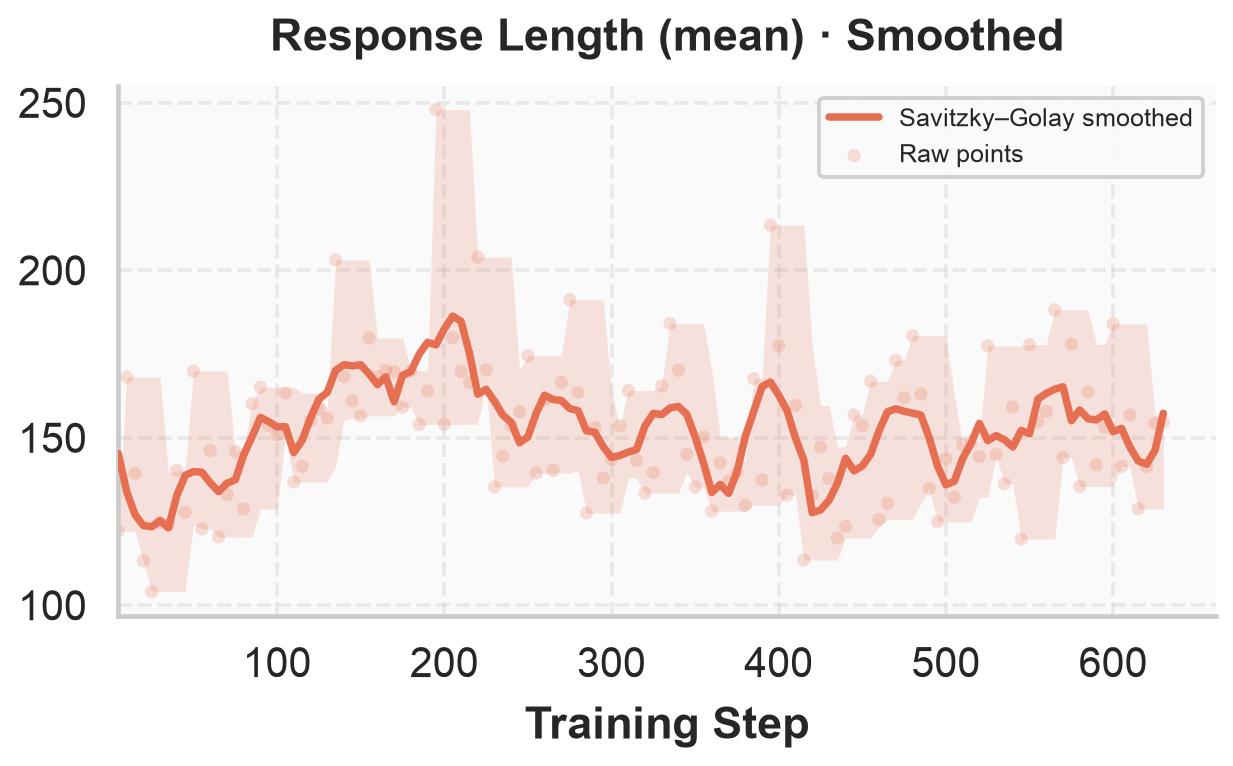}
    \end{subfigure}

    \vspace{0.3cm}

    \begin{subfigure}[t]{0.48\textwidth}
        \centering
        \includegraphics[width=\linewidth]{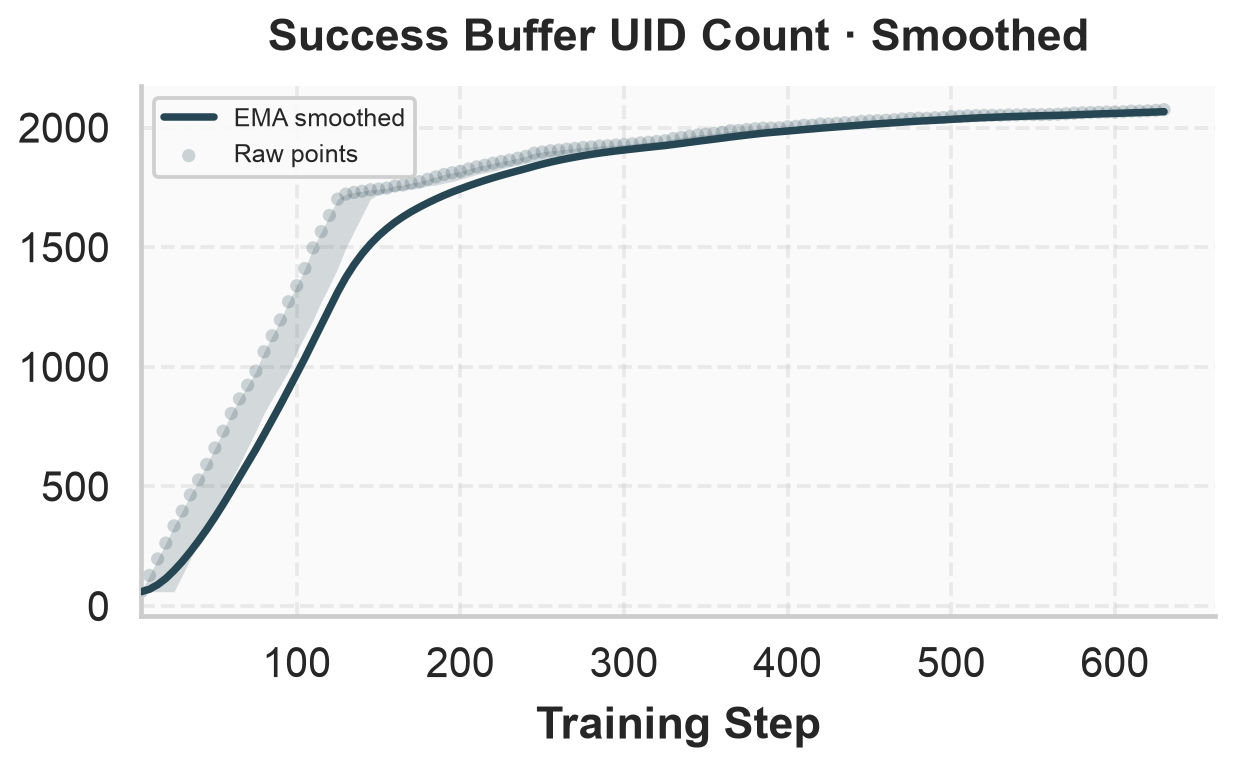}
    \end{subfigure}
    \hfill
    \begin{subfigure}[t]{0.48\textwidth}
        \centering
        \includegraphics[width=\linewidth]{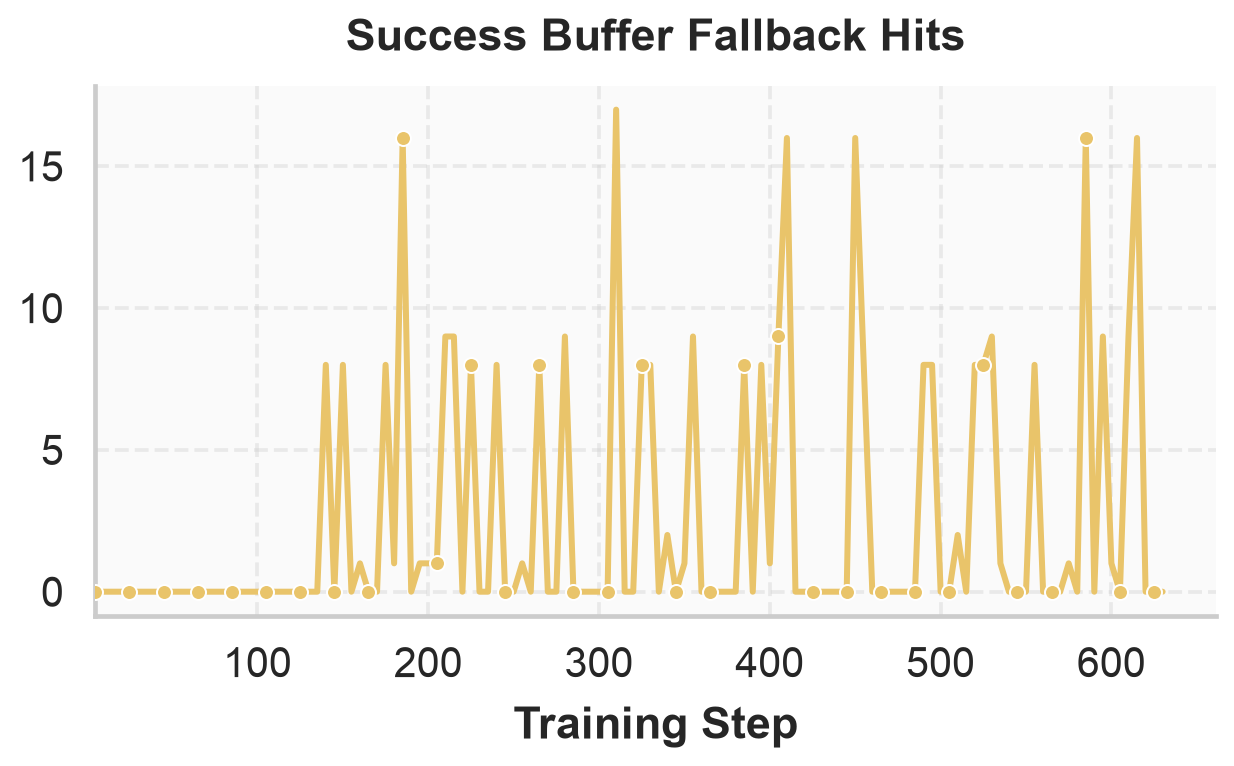}
    \end{subfigure}
    \caption{\textbf{Training dynamics of DRIFT (Qwen3-8B in tooluse)}}
    \label{fig:training_dynamics}
\end{figure}

\begin{figure}[htbp]
    \centering
    \begin{subfigure}[t]{0.32\textwidth}
        \centering
        \includegraphics[width=\linewidth]{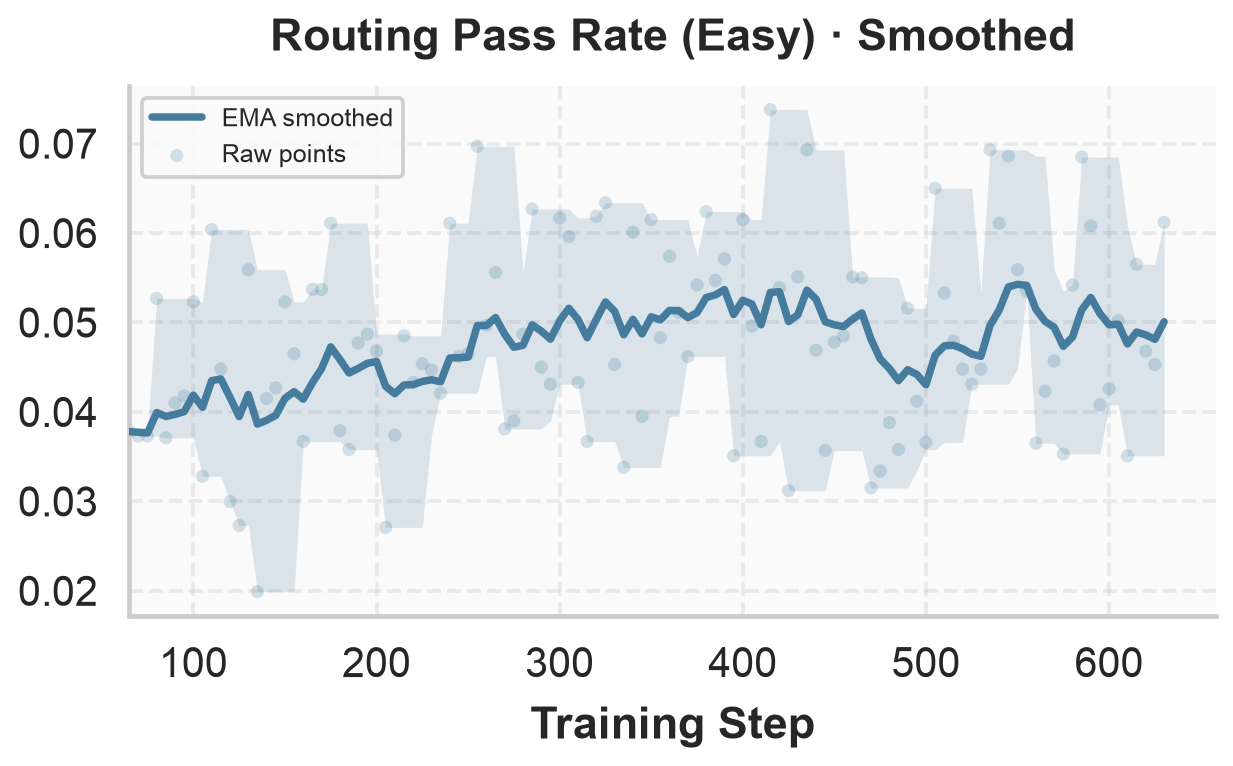}
    \end{subfigure}
    \hfill
    \begin{subfigure}[t]{0.32\textwidth}
        \centering
        \includegraphics[width=\linewidth]{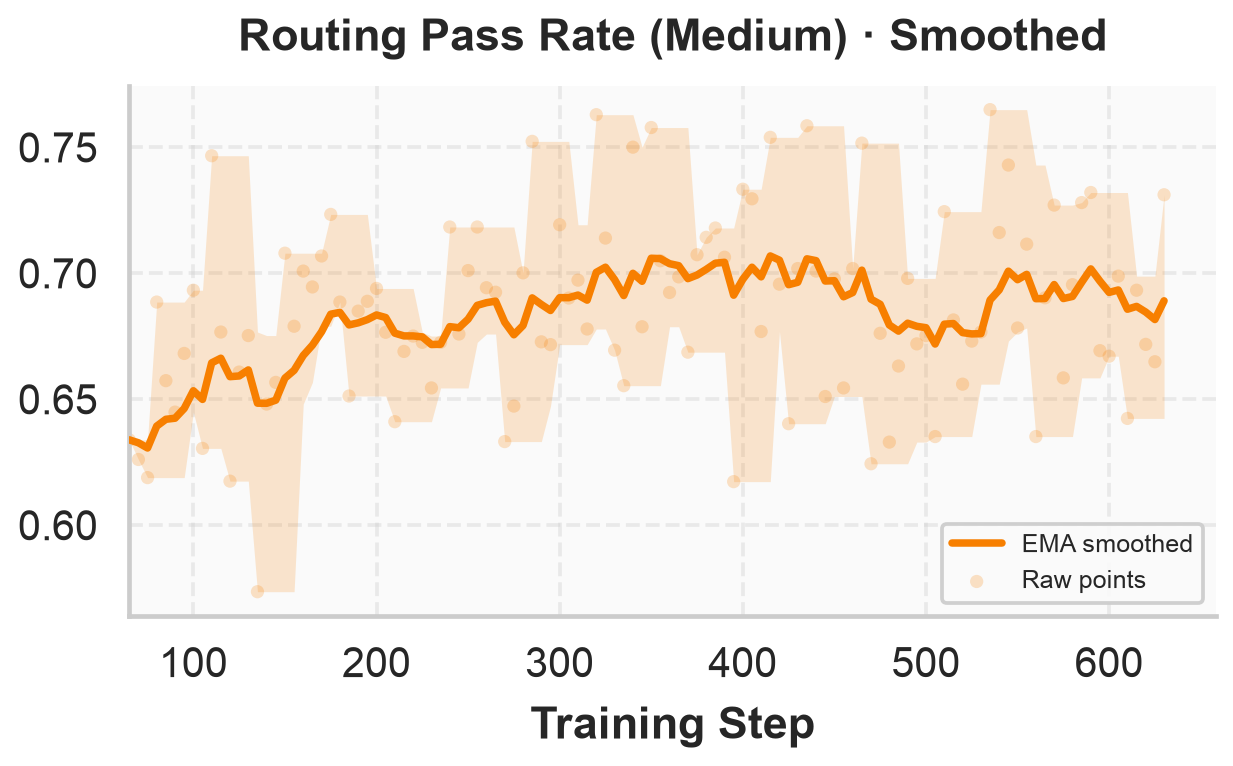}
    \end{subfigure}
    \hfill
    \begin{subfigure}[t]{0.32\textwidth}
        \centering
        \includegraphics[width=\linewidth]{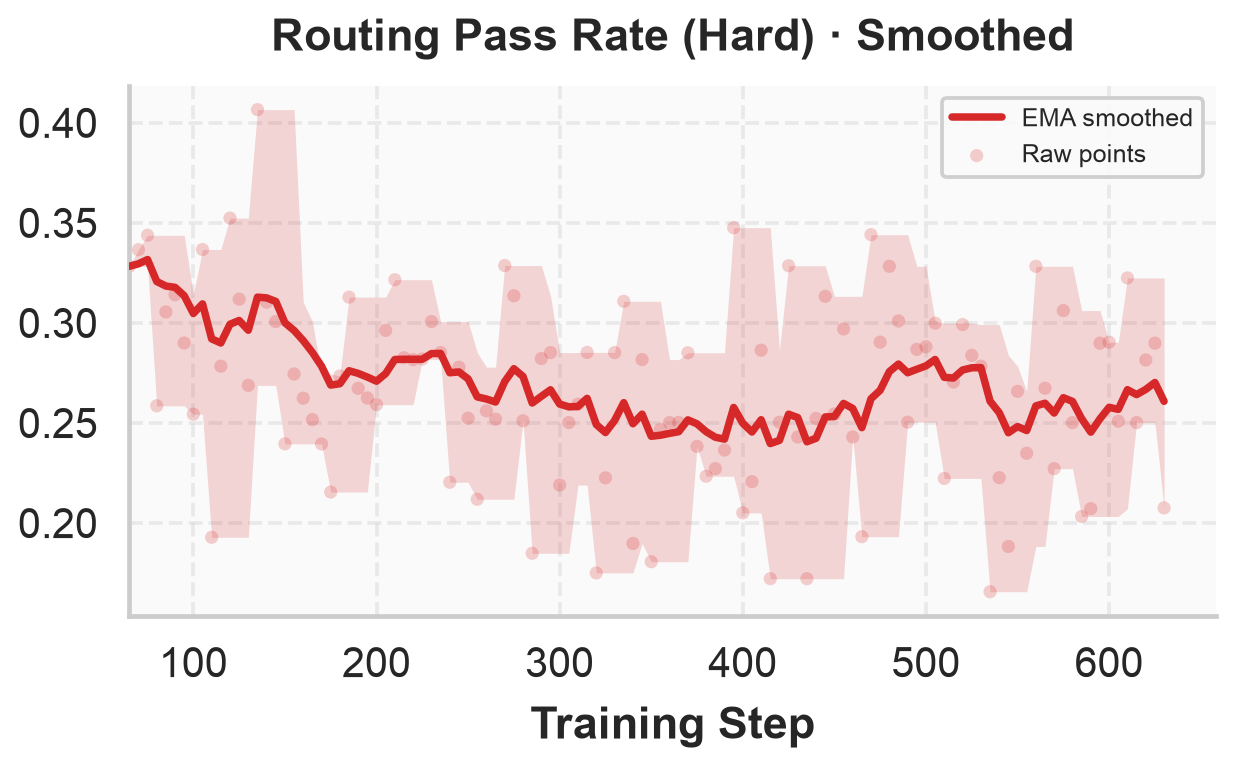}
    \end{subfigure}
    \caption{\textbf{Difficulty-routing dynamics (Qwen3-8B in tooluse)}}
    \label{fig:routing_dynamics}
\end{figure}

\subsection{Ablation Studies}

\paragraph{Component ablation.}
Figure~\ref{fig:tooluse_ablation} compares the mean@16 trajectories of DRIFT and four ablated variants on Tool Use. Because rhythm gating is only defined for the difficulty-routed GRPO branch, we evaluate a joint ablation that removes both difficulty routing and rhythm gating. Among the remaining variants, removing the warm-up stage slows early improvement substantially---the w/o warm-up curve lags behind all others in the first $\sim$50 steps. Removing the success buffer yields unstable mid-training progress between steps 100 and 200, when in-batch positives are scarce and cross-batch replay is most needed. Removing both difficulty routing and rhythm gating leads to a pronounced late-stage decline after step $\sim$200, whereas the complete method continues to improve and finishes with the highest accuracy ($\sim$79\%). These results show that curriculum warm-up, historical success reuse, and the coupled difficulty-routing/rhythm-gating mechanism are all important for stable long-term improvement on tool use.

\begin{figure}[htbp]
    \centering
    \includegraphics[width=0.7\linewidth]{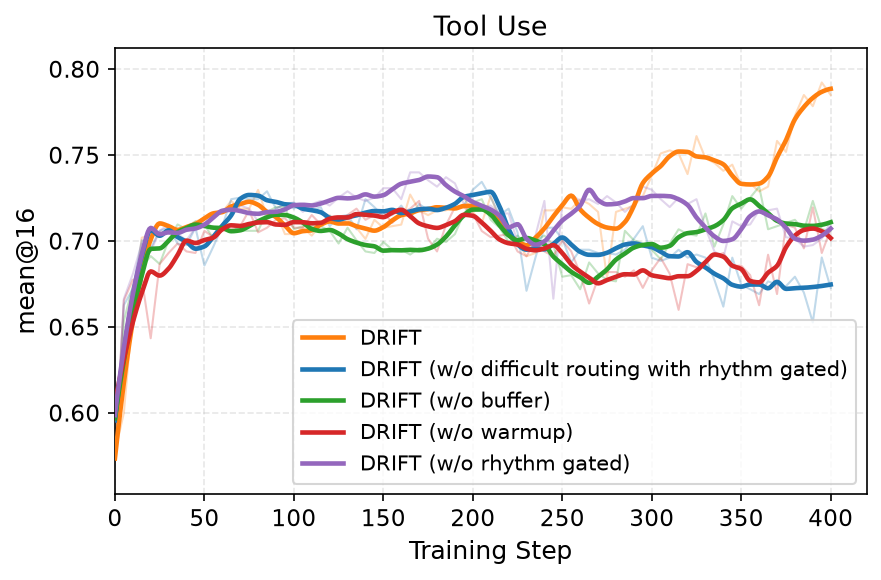}
    \caption{{Ablation study on Tool Use (Qwen3-8B).} Five training curves: full DRIFT and four ablations (w/o warm-up, w/o success buffer, w/o rhythm gating, and w/o difficulty routing with rhythm gating).}
    \label{fig:tooluse_ablation}
\end{figure}

\begin{table}[ht]
\centering
\small
\renewcommand{\arraystretch}{1.25}
\setlength{\tabcolsep}{10pt}
\begin{tabular}{lcccccc}
\toprule
Method & Biology & Chemistry & Material & Physics & Tool Use & Average \\
\midrule
\textbf{Qwen3-8B} & 30.8 & 41.2 & 58.9 & 59.2 & 57.5 & 49.5 \\
+ SDPO & 64.8 & 78.9 & 76.1 & 72.7 & 67.7 & 72.0 \\
+ GRPO & 47.4 & 65.6 & 73.5 & 60.6 & 67.7 & 63.0 \\
\midrule
+ DRIFT w/o difficulty routing & \underline{66.2} & \underline{80.0} & 77.2 & 73.7 & \underline{73.4} & \underline{74.1} \\
+ DRIFT w/o success buffer & 64.9 & 79.9 & \underline{78.8} & \underline{77.8} & 73.2 & 74.9 \\
+ DRIFT w/o warm-up & 58.1 & 78.5 & 75.1 & 74.3 & 72.4 & 71.7 \\
\textbf{+ DRIFT (Ours)} & \textbf{74.4} & \textbf{82.0} & \textbf{81.4} & \textbf{80.5} & \textbf{79.2} & \textbf{79.5} \\
\bottomrule
\end{tabular}
\caption{Ablation study of DRIFT on Qwen3-8B, reported as mean@16 accuracy (\%). \textbf{Bold} values indicate the best result, and \underline{underlined} values indicate the second-best result in each column.}
\label{tab:ablation_qwen3_8b}
\end{table}

\paragraph{Full ablation comparison.}
Table~\ref{tab:ablation_qwen3_8b} confirms that every major component of DRIFT contributes to the final performance. Removing difficulty routing reduces the average score from 79.5\% to 74.1\%, with especially clear drops on biology and tool use, indicating that problem-level allocation of RL and distillation signals is important for focusing optimization on the most learnable region. Removing the success buffer also causes a substantial degradation (74.9\% average), showing that cross-batch reuse of successful trajectories provides important privileged supervision when in-batch positives are sparse. The warm-up stage is likewise necessary: without it, performance falls to 71.7\%, and the drop is particularly severe on biology, suggesting that early-stage capability bootstrapping and buffer accumulation are critical for stable later optimization. Overall, the complete DRIFT design achieves the best result on all five benchmarks, showing that the gains come from the joint effect of curriculum warm-up, historical success reuse, and difficulty-aware routing rather than from any single component alone.

\section{Conclusion}
We addressed the lack of progress-aware signal allocation in LLM self-improvement and proposed DRIFT, an online self-evolution framework that coordinates self-distillation and reinforcement learning through difficulty routing, rhythm-gated exploration, a success buffer, and a two-stage curriculum. On five reasoning benchmarks, DRIFT reaches 79.5\% average accuracy, outperforming the strongest prior baseline by 2.1 points, with the largest gains on tool use.

Effective self-evolution depends less on the choice between reinforcement learning and self-distillation than on when and where each signal is applied. We view problem-level learning-state tracking and token-level credit shaping as general principles for self-evolution, and hope they inform future methods that let language models improve from their own experience without external expert supervision.

\bibliographystyle{plainnat}
\bibliography{references}

\appendix

\section{Further Analysis}

\subsection{Proof of Lemma~\ref{lem:grpo-deadzone}}
\label{sec:app-grpo-deadzone}

Consider a single prompt with $n$ i.i.d.\ rollouts under binary verifiable rewards $r_i\in\{0,1\}$. Let $k=\sum_{i=1}^n r_i$ denote the number of correct rollouts and $\bar r=k/n$ the group mean. Because the rewards are binary, the group standard deviation is
\[
\mathrm{std}(r)
=
\sqrt{\frac{1}{n}\sum_{i=1}^n(r_i-\bar r)^2}
=
\sqrt{\bar r(1-\bar r)}.
\]
GRPO normalizes rewards within the group to form the advantage
\[
A_i=\frac{r_i-\bar r}{\mathrm{std}(r)},
\]
with an all-equal group assigned zero advantage. Hence, $\mathrm{std}(r)=0$ exactly when $k=0$ or $k=n$. In either case, every rollout receives zero advantage, and the group contributes no policy-gradient signal. We refer to such a group as \emph{degenerate}.

If each rollout is correct with probability $p$, then
$k\sim\mathrm{Binomial}(n,p)$, and the probability of obtaining a degenerate group is
\[
P_{\mathrm{deg}}(p)
=
\Pr[k=0]+\Pr[k=n]
=
(1-p)^n+p^n.
\]
Equivalently, the probability that a group contains both correct and incorrect rollouts, and therefore provides a nonzero within-group learning signal, is
\[
P_{\mathrm{inf}}(p)
=
1-P_{\mathrm{deg}}(p)
=
1-(1-p)^n-p^n.
\]
The degeneration probability is symmetric about $p=1/2$, minimized at $p=1/2$, and increases monotonically toward $1$ as $p$ approaches either $0$ or $1$. Consequently, GRPO is most likely to provide an informative learning signal on intermediate-difficulty problems and increasingly likely to lose its within-group signal on problems that are either too hard or too easy. \qed

\paragraph{Implication for routing thresholds.}
The lemma establishes the qualitative need to concentrate full-strength GRPO updates on an intermediate-difficulty region. To calibrate this region quantitatively, we adopt a simple operational requirement: within the medium-difficulty region, at least $80\%$ of sampled rollout groups should contain both correct and incorrect responses and therefore produce a nonzero GRPO advantage.

For the rollout group size $n=8$ used in our experiments, the boundary of this reliability region is determined by
\[
1-(1-p)^8-p^8=0.8.
\]
Solving this equation gives symmetric boundaries of approximately
\[
p_{\mathrm{hard}}\approx0.18,
\qquad
p_{\mathrm{easy}}\approx0.82.
\]
We round these boundaries inward to $0.2$ and $0.8$, respectively, obtaining simple and symmetric routing regions while conservatively preserving the desired signal reliability. At either selected boundary,
\[
P_{\mathrm{inf}}(0.2)
=
P_{\mathrm{inf}}(0.8)
=
1-0.8^8-0.2^8
\approx0.832.
\]
Thus, throughout the interval $p\in[0.2,0.8]$, at least $83.2\%$ of rollout groups are expected to provide a nonzero GRPO signal. In contrast, at $p=0.1$ or $p=0.9$, this probability decreases to
\[
P_{\mathrm{inf}}(0.1)
=
P_{\mathrm{inf}}(0.9)
=
1-0.9^8-0.1^8
\approx0.570.
\]
The thresholds $0.2$ and $0.8$ therefore define a conservative region in which GRPO signals are reliably available, while excluding the hard and easy regimes in which group-level advantages become substantially more likely to degenerate.

\subsection{Proof of Lemma~\ref{lem:rhythm-slope}}
\label{sec:app-rhythm-slope}

\begin{proof}
\emph{Offset invariance.} Adding a constant $d$ to every $H_j$ shifts each of
the two window means by the same $d$, so their difference is unchanged; since
$\mathrm{Drop}_H(t)$ depends on $H$ only through that difference, it is
invariant. In particular the offset $a$ is irrelevant, so we may set $a=0$.

\emph{Slope readout.} On $\mathcal{N}(t)$ we have $H_j=c\,j$. The mean of an
affine sequence over a window equals its value at the window's center index, so
each window mean is $c$ times that center. The two windows are adjacent and each
of width $W$, hence their centers differ by exactly $W$; therefore the
earlier-minus-later gap of window means equals $-cW$. Rectifying,
\[
\mathrm{Drop}_H(t)=\big(\bar H_{\text{earlier}}-\bar H_{\text{later}}\big)_+
=(-cW)_+=W\,(-c)_+,
\]
using $W>0$.
\end{proof}

\subsection{Further Analysis of the Branch Assignment}
\label{sec:app-branch-geometry}

We use a direct-softmax head, so at each position the policy is $p=\mathrm{softmax}(z)$ over the vocabulary $\mathcal{V}$. We state the argument in the unclipped regime $\rho_{i,t}\in[1-\epsilon,1+\epsilon]$ (clipping bounds the step magnitude but not its direction), treat the teacher $q=\pi_\theta(\cdot\mid x,f,y_{i,<t})$ as stop-gradient, and assume binary verifiable rewards.

\paragraph{Negative samples.}
From the softmax identity $\nabla_{z}\log p_a=e_a-p$, the GRPO logit-gradient $A_i(e_a-p)$ has component $A_i(1-p_a)<0$ at the sampled token $a$ (suppression) and component $-A_i\,p_v>0$ at every other token $v\neq a$ (an increase proportional to the current $p_v$). Its direction is thus fixed by $p$ and $a$ alone and is independent of which token is correct: probability is merely drained from $a$ and smeared over the model's existing beliefs. The SDPO term instead descends $\mathrm{JSD}(p\|q)$, which satisfies $\mathrm{JSD}(p\|q)\ge0$ with equality iff $p=q$ and is strictly convex in $p$ for fixed $q$; gradient descent therefore moves $p$ toward $q$, whose conditioning on the privileged feedback $f$ concentrates mass on the correct continuation. Hence only SDPO supplies a corrective target on negatives, and when the whole group is incorrect the GRPO advantage is exactly zero (Lemma~\ref{lem:grpo-deadzone}), leaving SDPO as the sole source of signal.

\paragraph{Positive samples.}
Let $y_i$ and the teacher-inducing sibling $f$ both be correct. Since the reward depends only on final correctness, it is constant on the face $\Delta^{\mathrm{c}}$ of the simplex where all mass lies on correct continuations; hence the reward objective is stationary along every direction tangent to $\Delta^{\mathrm{c}}$, and in particular its gradient does not distinguish among distinct correct solutions. In contrast, for a fixed distinct correct teacher $q\neq p$ the objective $\mathrm{JSD}(p\|q)$ is minimized only at $p=q$, so $-\nabla_\theta\mathrm{JSD}(p\|q)\neq 0$ and points toward that specific correct mode; a first-order expansion around $p=q$ gives $\mathrm{JSD}(p\|q)=\tfrac18\sum_v (p_v-q_v)^2/q_v+o(\|p-q\|^2)$, so the misaligned gradient's norm is monotone in the teacher--student divergence. Thus SDPO on positives injects a reward-irrelevant component that pulls the policy toward one arbitrary correct sibling, reducing solution diversity, whereas the GRPO update $A_i(e_a-p)$ with $A_i>0$ raises $p_a$ for the sample's own correct token---an on-policy, reward-aligned reinforcement.

\subsection{KL, rKL and JSD}
\label{app:divergence}
The choice of divergence plays a fundamental role in on-policy distillation, as it determines how the student policy is encouraged to approach the teacher distribution. Given a teacher policy $\pi_{\mathrm{T}}(\cdot \mid x)$ and a student policy $\pi_{\mathrm{S}}(\cdot \mid x)$ over the vocabulary $\mathcal{V}$, different divergence objectives induce distinct optimization dynamics and consequently lead to different exploration--exploitation behaviors.

The \textbf{forward Kullback--Leibler (KL) divergence}
\[
D_{\mathrm{KL}}
\left(
\pi_{\mathrm{T}}(\cdot \mid x)
\middle\|
\pi_{\mathrm{S}}(\cdot \mid x)
\right)
=
\sum_{a \in \mathcal{V}}
\pi_{\mathrm{T}}(a \mid x)
\log
\frac{
\pi_{\mathrm{T}}(a \mid x)
}{
\pi_{\mathrm{S}}(a \mid x)
}
\]
measures the expected discrepancy under the teacher distribution. Since the optimization emphasizes regions where the teacher assigns high probability, Forward KL strongly penalizes missing probability mass. As a result, the student is encouraged to recover every plausible behavior supported by the teacher, leading to a \textbf{mode-covering} optimization strategy. Such a property is beneficial for preserving behavioral diversity and maintaining exploration. However, because all teacher-supported actions contribute to the objective regardless of their relative quality, Forward KL may continue allocating probability to low-confidence behaviors, slowing policy concentration during later optimization.

In contrast, the \textbf{reverse KL divergence}
\[
D_{\mathrm{KL}}
\left(
\pi_{\mathrm{S}}(\cdot \mid x)
\middle\|
\pi_{\mathrm{T}}(\cdot \mid x)
\right)
=
\sum_{a \in \mathcal{V}}
\pi_{\mathrm{S}}(a \mid x)
\log
\frac{
\pi_{\mathrm{S}}(a \mid x)
}{
\pi_{\mathrm{T}}(a \mid x)
}
\]
evaluates the discrepancy under the student distribution. Actions assigned negligible probability by the student contribute little to the optimization, even if they remain probable under the teacher. Consequently, Reverse KL tends to concentrate probability mass onto a small subset of high-confidence actions, exhibiting the well-known \textbf{mode-seeking} behavior. Although this often accelerates convergence and produces sharper policies, it also increases the risk of premature mode collapse, thereby reducing policy diversity and limiting future exploration.

To balance these two extremes, we consider the \textbf{Jensen--Shannon divergence}. Let the mixture distribution $\pi_{\mathrm{M}}(\cdot \mid x)$ be defined as
\[
\pi_{\mathrm{M}}(\cdot \mid x)
=
\frac{1}{2}
\left(
\pi_{\mathrm{T}}(\cdot \mid x)
+
\pi_{\mathrm{S}}(\cdot \mid x)
\right).
\]
Then the Jensen--Shannon divergence between the teacher and student policies is
\[
D_{\mathrm{JSD}}
\left(
\pi_{\mathrm{T}}(\cdot \mid x),
\pi_{\mathrm{S}}(\cdot \mid x)
\right)
=
\frac{1}{2}
D_{\mathrm{KL}}
\left(
\pi_{\mathrm{T}}(\cdot \mid x)
\middle\|
\pi_{\mathrm{M}}(\cdot \mid x)
\right)
+
\frac{1}{2}
D_{\mathrm{KL}}
\left(
\pi_{\mathrm{S}}(\cdot \mid x)
\middle\|
\pi_{\mathrm{M}}(\cdot \mid x)
\right).
\]

Beyond its distribution-level definition, the Jensen--Shannon divergence also induces a bounded token-level optimization signal through its $f$-divergence form. Define the local probability ratio between the student and teacher policies at token $v \in \mathcal{V}$ as
\[
r(v \mid x)
=
\frac{
\pi_{\mathrm{S}}(v \mid x)
}{
\pi_{\mathrm{T}}(v \mid x)
}.
\]
Equivalently, we parameterize this ratio by a scalar $u(v \mid x)$:
\[
r(v \mid x) = e^{-u(v \mid x)},
\qquad
u(v \mid x)
=
\log \pi_{\mathrm{T}}(v \mid x) - \log \pi_{\mathrm{S}}(v \mid x).
\]
Thus $u>0$ marks tokens that are \emph{under-represented} by the student relative to the teacher, and $u<0$ marks tokens that are \emph{over-represented} by the student.

The Jensen--Shannon divergence can be written as an $f$-divergence with respect to the teacher distribution:
\[
D_{\mathrm{JSD}}
\left(
\pi_{\mathrm{T}},
\pi_{\mathrm{S}}
\right)
=
\mathbb{E}_{v \sim \pi_{\mathrm{T}}}
\left[
f_{\mathrm{JSD}}
\left(
r(v \mid x)
\right)
\right],
\]
where the JSD generator $f_{\mathrm{JSD}}$ is
\[
f_{\mathrm{JSD}}(r)
=
\frac{1}{2}
r
\log
\frac{2r}{1+r}
+
\frac{1}{2}
\log
\frac{2}{1+r}.
\]
Taking the derivative of this generator with respect to $r$ gives
\[
f_{\mathrm{JSD}}'(r)
=
\frac{1}{2}
\log
\frac{2r}{1+r}.
\]

\paragraph{Gradient with respect to the student policy.}
Throughout the derivation the teacher policy is treated as a fixed target (stop-gradient), i.e.\ $\nabla_\theta \pi_{\mathrm{T}} = 0$. Differentiating the $f$-divergence and noting that
\[
\nabla_\theta r(v\mid x)
=
\nabla_\theta \frac{\pi_{\mathrm{S}}(v\mid x)}{\pi_{\mathrm{T}}(v\mid x)}
=
\frac{\pi_{\mathrm{S}}(v\mid x)}{\pi_{\mathrm{T}}(v\mid x)}\,
\nabla_\theta \log \pi_{\mathrm{S}}(v\mid x)
=
r(v\mid x)\,\nabla_\theta \log \pi_{\mathrm{S}}(v\mid x),
\]
the extra factor $r$ converts the teacher expectation into a student expectation:
\[
\nabla_{\theta}
D_{\mathrm{JSD}}
=
\sum_{v}\pi_{\mathrm{T}}(v)\,f_{\mathrm{JSD}}'(r)\,r\,\nabla_\theta\log\pi_{\mathrm{S}}(v)
=
\mathbb{E}_{v \sim \pi_{\mathrm{S}}}
\left[
f_{\mathrm{JSD}}'
\left(
r(v \mid x)
\right)
\nabla_{\theta}
\log
\pi_{\mathrm{S}}(v \mid x)
\right].
\]
Hence the sampled-token gradient coefficient is exactly $f_{\mathrm{JSD}}'(r)$.

\paragraph{Softplus form of the coefficient.}
Substituting $r(v \mid x)=e^{-u(v \mid x)}$ into $f_{\mathrm{JSD}}'(r)$ yields
\[
f_{\mathrm{JSD}}'
\left(
e^{-u}
\right)
=
\frac{1}{2}
\log
\frac{2e^{-u}}{1+e^{-u}}
=
\frac{1}{2}
\log
\frac{2}{1+e^{u}}.
\]
Using the definition
\[
\operatorname{softplus}(u)
=
\log(1+e^u),
\]
we obtain
\[
f_{\mathrm{JSD}}'
\left(
e^{-u}
\right)
=
-\frac{1}{2}
\left(
\operatorname{softplus}(u) - \log 2
\right).
\]
We therefore define
\[
\varphi(u)
=
\frac{1}{2}
\left(
\operatorname{softplus}(u) - \log 2
\right),
\qquad
\text{so that}
\qquad
f_{\mathrm{JSD}}'\!\left(e^{-u}\right) = -\varphi(u).
\]

\paragraph{From gradient coefficient to advantage.}
Since training \emph{minimizes} $D_{\mathrm{JSD}}$, a gradient-descent step moves $\theta$ along $-\nabla_\theta D_{\mathrm{JSD}}$, so the effective weight multiplying $\nabla_\theta \log \pi_{\mathrm{S}}(v)$ in the ascent direction is $-f_{\mathrm{JSD}}'(r) = \varphi(u)$. We therefore adopt this quantity as the token-level advantage,
\[
A_t
=
-f_{\mathrm{JSD}}'\!\left(e^{-u_t}\right)
=
\varphi(u_t),
\]
and write the objective as the stop-gradient surrogate
\[
\mathcal{L}^{\mathrm{JSD}}
=
-\,\mathbb{E}_{v \sim \pi_{\mathrm{S}}}
\left[
\operatorname{sg}(A_t)\,
\log \pi_{\mathrm{S}}(v \mid x)
\right],
\qquad
\nabla_\theta \mathcal{L}^{\mathrm{JSD}} = \nabla_\theta D_{\mathrm{JSD}},
\]
where $u_t$ denotes the value of $u(v \mid x)$ at the sampled token and position $t$. With this convention a \emph{positive} advantage reinforces a token (increases its student log-probability) and a \emph{negative} advantage suppresses it, matching the usual policy-gradient reading.

\paragraph{Directional and boundedness properties.}
The advantage preserves the direction of the discrepancy while controlling the magnitude of the update signal. Since $\operatorname{softplus}(0)=\log 2$ and $\operatorname{softplus}$ is strictly increasing,
\[
\operatorname{sign}(A_t)
=
\operatorname{sign}(\varphi(u_t))
=
\operatorname{sign}(u_t),
\]
so under-represented teacher tokens ($u_t>0$) are reinforced while over-represented student tokens ($u_t<0$) are suppressed. Around $u=0$,
\[
A_t
=
\varphi(u)
\approx
\frac{1}{4}u,
\]
which recovers the local behavior of a KL-based correction up to a constant scale. The two tails, however, are markedly asymmetric. As $u\to-\infty$,
\[
\varphi(u)\;\longrightarrow\;-\tfrac{1}{2}\log 2,
\]
so $\varphi(u) \ge -\tfrac{1}{2}\log 2$ and the \emph{suppression} applied to over-represented tokens is bounded in magnitude by $\tfrac{1}{2}\log 2$. As $u\to+\infty$, in contrast,
\[
\varphi(u)\approx\tfrac{1}{2}u\;\longrightarrow\;+\infty,
\]
so \emph{reinforcement} of under-represented teacher tokens grows without bound (asymptotically linearly).

This asymmetry is exactly what a symmetric matching objective should exhibit. The bounded negative branch prevents tokens in the heavily over-represented tail of $\pi_{\mathrm{S}}$ from receiving unbounded penalties; unlike Reverse KL, whose per-token weight diverges to $-\infty$ on such tokens and produces large destabilizing gradients, JSD caps how aggressively the student erases its own probability mass, preserving diversity and improving numerical stability. Meanwhile, the unbounded, mode-covering positive branch keeps pulling up teacher-preferred behaviors that the student currently misses, which is the term that actually guards against premature mode collapse.

At the distribution level, JSD provides a complementary form of balance. Rather than directly measuring $\pi_{\mathrm{T}}$ against $\pi_{\mathrm{S}}$, it compares both policies with respect to their shared mixture distribution $\pi_{\mathrm{M}}$. As a result, deviations from either side are treated symmetrically, avoiding the one-sided bias of purely Forward-KL or Reverse-KL objectives.

This property is especially important in on-policy self-distillation, where the teacher $\pi_{\mathrm{T}}$ and the student $\pi_{\mathrm{S}}$ evolve simultaneously throughout training and neither policy should be regarded as an immutable target distribution. Concretely, at each step we treat the teacher as a stop-gradient copy of the current (or slowly-updated) policy, so that the gradient derivation above holds step-wise even though the target drifts across steps. A symmetric divergence therefore avoids anchoring the optimization too strongly to either policy, while still maintaining sufficient exploration and preventing overly aggressive policy collapse. For these reasons, we adopt the Jensen--Shannon divergence as the distribution matching objective in our framework.

\subsection{PG-style OPD vs. GKD-style OPD}
Although the sampled-token log-ratio is one of the most common implementation forms in current on-policy distillation (OPD), it does not characterize the entire OPD family. More fundamentally, existing OPD methods can be divided according to how gradients are propagated from tokens back to model parameters, rather than simply by whether they use reverse KL or another divergence objective.

\paragraph{PG-style OPD: Policy Gradient / REINFORCE.}
PG-style OPD reformulates distillation as a policy-gradient reinforcement learning problem. The model first samples trajectories from the current student policy,
\[
y_t \sim p_t(\cdot),
\]
and then constructs a token-level reward from the log-ratio between the teacher and the student on the sampled token:
\[
r_t
=
\operatorname{sg}
\left[
\log q_t(y_t) - \log p_t(y_t)
\right],
\]
where both the teacher output and the reward are treated with stop-gradient. The optimization objective can be written as the REINFORCE surrogate
\[
\mathcal{L}^{\mathrm{PG}}
=
-
\mathbb{E}_{y \sim p}
\sum_t
r_t \log p_t(y_t).
\]
Under this framework, gradients are propagated only through $\log p_t(y_t)$ for the sampled token. Therefore, PG-style OPD corresponds to a sparse, single-sample gradient estimator, which typically has high variance and often requires techniques such as advantage normalization, baselines, or reward shaping for stabilization. Representative methods include REOPOLD, which interprets the log-ratio as a token reward, AOPD, which optimizes positive and negative advantage branches, and the policy-gradient form of OPSD.

\paragraph{GKD-style OPD: Loss-form / Generalized Knowledge Distillation.}
Another class of OPD methods preserves the supervised-learning form of distillation. On trajectories sampled from the student policy, the teacher only provides a stop-gradient supervision signal, while the training objective directly minimizes a distributional divergence at each position:
\[
\mathcal{L}^{\mathrm{GKD}}
=
\mathbb{E}_{y \sim p}
\sum_t
D
\left(
p_t(\cdot)
\,\middle\|\,
q_t(\cdot)
\right),
\qquad
q_t
=
\operatorname{sg}(\text{teacher}).
\]
Unlike PG-style OPD, the gradient does not come only from the sampled token, but is propagated through the predicted probabilities $p_t(a)$ over the entire vocabulary, or over a Top-$K$ truncated candidate set. Thus, this class of methods yields dense, analytic-expectation gradients with substantially lower variance, making it closer to traditional knowledge distillation. Depending on the divergence definition, $D(\cdot)$ can be instantiated as Forward KL, Reverse KL, Jensen--Shannon divergence (JSD), or their generalized variants. Representative methods include classical generalized knowledge distillation (GKD), the Top-$K$ Reverse-KL objective proposed in \emph{The Many Faces of Distillation}, and most OPD implementations based on distillation losses.

\paragraph{Comparison.}
The core difference between the two classes does not lie in which divergence is adopted, but in the gradient propagation mechanism. PG-style OPD treats on-policy distillation as policy optimization driven by log-ratio rewards. Its gradient depends only on sampled tokens and therefore corresponds to sparse, single-sample estimation. This leads to higher variance, but allows the method to be naturally integrated into reinforcement learning frameworks.

In contrast, GKD-style OPD formulates on-policy distillation as differentiable distribution matching on on-policy trajectories. Its gradient directly acts on the full output distribution, yielding dense, low-variance analytic gradients and more stable training.

Therefore, from an optimization perspective, the essential distinction is not reward optimization versus distribution matching, but rather two different gradient computation paradigms: sample-based gradient estimation through policy gradients and distribution-level differentiation through analytic gradients. This distinction is the fundamental reason why existing OPD methods differ in optimization stability, gradient variance, and computational efficiency.

\section{Related Works}
\subsection{RLVR}
Aligning and adapting large language models (LLMs) increasingly relies on post-training with verifiable rewards, a strategy grounded in classic policy-gradient algorithms such as REINFORCE and PPO~\cite{williams1992simple,schulman2017proximal}. An expanding body of literature builds upon these principles for LLM optimization, where trajectories sampled from the model are optimized using sequence-level outcome rewards~\cite{guo2025deepseekr1,shao2024deepseekmath,yu2026dapo,liu2025r1zero,zheng2025gspo,zhoubian2025restrl}. Among these approaches, GRPO has emerged as a robust and scalable baseline by estimating advantages from group-relative rewards without requiring a separate critic network~\cite{shao2024deepseekmath}.

A major limitation of these methods, however, is their coarse-grained credit assignment, which distributes a single scalar advantage uniformly across all generated tokens. Recent studies have shown that such homogeneous supervision leads to gradient dilution among causally unrelated tokens~\cite{khandoga2026beyond}, introduces sequence-length-dependent optimization biases~\cite{parthasarathi2025grpo}, and fails to accurately localize semantic errors in otherwise correct code generations~\cite{kumar2026execution}. To address this issue, several recent frameworks leverage process supervision and process reward models (PRMs) to provide dense, step-level supervisory signals derived from intermediate reasoning trajectories~\cite{lightman2024lets,setlur2025rewarding,cui2025process}. Although these methods substantially improve credit assignment resolution, they rely on additional trained reward evaluators, increasing both annotation and computational costs. This limitation motivates approaches that can provide dense supervision without requiring auxiliary reward models.

\subsection{Dense Supervision in On-Policy Distillation}

Recent advances in on-policy distillation have demonstrated that dense token-level supervision provides substantially richer optimization signals than sparse outcome-level reinforcement learning, leading to improved sample efficiency and reasoning performance \cite{agarwal2024opd,gu2024gkd,lu2025opd}. However, growing evidence suggests that the effectiveness of this paradigm is fundamentally determined by the quality of the teacher-provided supervision rather than the distillation objective itself. Recent studies therefore increasingly focus on improving the reliability, informativeness, and optimization behavior of token-level guidance from different perspectives.

One line of work revisits the optimization formulation of on-policy distillation. G-OPD interprets OPD as a special case of KL-regularized reinforcement learning and generalizes it through reward extrapolation and flexible reference policies, enabling students to surpass the capability boundary of their teachers \cite{yang2026gopd}. REOPOLD further analyzes OPD from a policy optimization perspective and attributes training instability to overly aggressive imitation, proposing relaxed optimization strategies including reward clipping, entropy-guided token sampling, and exploration-to-refinement curricula to improve stability and sample efficiency \cite{ko2026reopold}.

Another line of work focuses on improving the fidelity of teacher supervision. EOPD observes that conventional reverse-KL distillation collapses generation diversity and provides unstable gradients on high-entropy tokens, motivating an adaptive combination of reverse and forward KL to preserve uncertainty while maintaining efficient learning \cite{jin2026entropyopd}. Meanwhile, recent analyses of self-distillation reveal that privileged-context teachers may gradually suppress epistemic verbalization and uncertainty-aware reasoning, causing performance degradation on challenging reasoning tasks despite shorter reasoning traces \cite{kim2026selfdistill}. In agentic settings, SDAR further shows that multi-turn interaction amplifies teacher--student divergence and makes privileged supervision inherently asymmetric, advocating confidence-aware gating that selectively trusts teacher guidance according to token-level reliability \cite{lu2026sdar}.

Overall, existing studies consistently indicate that improving dense supervision is not merely a matter of designing better optimization objectives, but also requires accurately characterizing when teacher signals are reliable, uncertain, or even misleading. Our work follows this direction from a complementary perspective. Rather than modifying the distillation objective itself or relying on uncertainty estimated from the teacher, we explicitly model the learning dynamics of each training sample and adaptively route optimization signals according to sample difficulty, enabling dense supervision to remain informative throughout the entire post-training process.

\section{Experimental Details}
\subsection{Technical setup}
All experiments were conducted on a single node equipped with 8 NVIDIA H200 GPUs interconnected via NVLink, providing a total of 1123 GB VRAM. Our software environment
uses GPU driver version 570.158.01, CUDA 12.9, and PyTorch 2.10.0+cu129.

Our implementation is built on the verl reinforcement learning framework \cite{sheng2025hybridflow}. We use PyTorch Fully Sharded Data Parallel (FSDP) for distributed policy optimization. For rollout generation, we employ vLLM \cite{kwon2023efficient} through verl's hybrid actor--rollout engine, which enables efficient batched inference across the multi-GPU node.

To ensure a fair comparison, GRPO and SDPO baselines largely follow the hyperparameter settings of SDPO~\citep{hubotter2026reinforcement}; DRIFT uses the same settings except where noted below. Table~\ref{tab:hyperparameters} summarizes the full configuration.

\subsection{Hyperparameters}

Table~\ref{tab:hyperparameters} lists the training hyperparameters for GRPO, SDPO, and DRIFT. Parameters that do not apply to a method are marked with ``---''.

\begin{table*}[htbp]
\centering
\footnotesize
\setlength{\tabcolsep}{4pt}
\renewcommand{\arraystretch}{1.15}
\caption{Training hyperparameters for GRPO, SDPO, and DRIFT.}
\label{tab:hyperparameters}
\begin{tabular*}{0.7\textwidth}{@{\extracolsep{\fill}}lccc@{}}
\toprule
\textbf{Parameters} & \textbf{GRPO} & \textbf{SDPO} & \textbf{DRIFT} \\
\midrule
\multicolumn{4}{l}{\textbf{General}} \\
Model & Qwen3-\{4B,8B\} & Qwen3-\{4B,8B\} & Qwen3-\{4B,8B\} \\
 & OLMo3-7B & OLMo3-7B & OLMo3-7B \\
Thinking & False & False & False \\
\midrule
\multicolumn{4}{l}{\textbf{Data}} \\
Max.\ prompt length & 2048 & 2048 & 2048 \\
Max.\ response length & 8192 & 8192 & 8192 \\
\midrule
\multicolumn{4}{l}{\textbf{Batching}} \\
Question batch size & 32 & 32 & 32 \\
Mini batch size & 32 & 32 & 32 \\
Number of rollouts & 8 & 8 & 8 \\
\midrule
\multicolumn{4}{l}{\textbf{Rollout}} \\
Inference engine & vLLM & vLLM & vLLM \\
Temperature & 1.0 & 1.0 & 1.0 \\
\midrule
\multicolumn{4}{l}{\textbf{Validation}} \\
Number of rollouts & 16 & 16 & 16 \\
Temperature & 0.6 & 0.6 & 0.6 \\
Top-$p$ & 0.95 & 0.95 & 0.95 \\
\midrule
\multicolumn{4}{l}{\textbf{SDPO loss}} \\
Top-$K$ distillation & --- & 100 & 100 \\
Distillation divergence & --- & Jensen--Shannon & Jensen--Shannon \\
Teacher-EMA update rate & --- & 0.05 & 0.05 \\
Rollout IS clip ($\rho$) & --- & 2 & 2 \\
\midrule
\multicolumn{4}{l}{\textbf{Two-stage training}} \\
Warm-up training steps & --- & --- & 64 \\
\midrule
\multicolumn{4}{l}{\textbf{Difficulty routing}} \\
$\gamma_{\mathrm{hard}}$ & --- & --- & 0 \\
$\gamma_{\mathrm{easy}}$ & --- & --- & 0.5 \\
$p_{\mathrm{easy}}$ & --- & --- & 0.8 \\
$p_{\mathrm{hard}}$ & --- & --- & 0.2 \\
\midrule
\multicolumn{4}{l}{\textbf{Rhythm gate}} \\
Local window length $W$ & --- & --- & 10 \\
\midrule
\multicolumn{4}{l}{\textbf{Success buffer}} \\
Buffer size per uid & --- & --- & 3 \\
Buffer accumulation epochs & --- & --- & 3 \\
\midrule
\multicolumn{4}{l}{\textbf{Training}} \\
Optimizer & AdamW & AdamW & AdamW \\
Learning rate & $1{\times}10^{-5}$ & $1{\times}10^{-5}$ & $5{\times}10^{-6}$ \\
Warmup steps & 10 & 10 & 10 \\
Weight decay & 0.01 & 0.01 & 0.01 \\
Gradient clip norm & 1.0 & 1.0 & 1.0 \\
\bottomrule
\end{tabular*}
\end{table*}

\subsection{Prompt Templates}\label{sec:app-prompts}

We use the same prompt templates as SDPO~\citep{hubotter2026reinforcement} without any modification, ensuring a fair comparison across all methods. The Science Q\&A benchmarks (Chemistry, Physics, Biology, Materials) share a common multiple-choice format, while Tool Use follows a separate tool-calling format. For Tool Use, the full tool specification in each user prompt is identical to SDPO; Table~\ref{tab:prompt-templates} shows the response-format skeleton only.

\begin{table}[htbp]
\centering
\footnotesize
\renewcommand{\arraystretch}{1.2}
\setlength{\tabcolsep}{4pt}
\caption{Prompt templates used in all experiments (identical to SDPO).}
\label{tab:prompt-templates}
\begin{tabular}{p{1.8cm}p{1.2cm}p{11.5cm}}
    \toprule
    \textbf{Benchmark} & \textbf{Role} & \textbf{Prompt} \\
    \midrule
    Science Q\&A & System &
    \begin{minipage}[t]{11.3cm}\ttfamily\footnotesize
Given a question and four options, please select the right answer. Respond in the following format:\\
<reasoning>\\
\ldots\\
</reasoning>\\
<answer>\\
\ldots\\
</answer>\\[4pt]
For the answer, only output the letter corresponding to the correct option (A, B, C, or D), and nothing else. Do not restate the answer text. For example, if the answer is ``A'', just output:\\
<answer>\\
A\\
</answer>
    \end{minipage} \\
    Science Q\&A & User &
    \begin{minipage}[t]{11.3cm}\ttfamily\footnotesize
\{question\}\\
Please reason step by step.
    \end{minipage} \\
    Tool Use & System &
    \begin{minipage}[t]{11.3cm}\ttfamily\footnotesize
You are a tool-use assistant. Solve each request by reasoning about the task and calling the provided tools when needed.\\
Use only the tools provided in the user message.\\
Follow the required response format exactly.
    \end{minipage} \\
    Tool Use & User &
    \begin{minipage}[t]{11.3cm}\ttfamily\footnotesize
Your task is to answer the user's question using available tools.\\
You have access to the following tools:\\
\lbrack\{tool name, description, parameters, and outputs\}\rbrack\\[4pt]
Use the following format:\\
Thought: \ldots\\
Action: <tool name>\\
Action Input: <JSON>\\[4pt]
Begin!\\
Question: \{user query\}
    \end{minipage} \\
    \bottomrule
\end{tabular}
\end{table}

\subsection{Benchmark Details}\label{sec:app-benchmarks}

We use the exact train/test splits provided in the official SDPO GitHub repository to ensure full comparability. Table~\ref{tab:benchmark-stats} summarizes the dataset statistics.

\begin{table}[htbp]
\centering
\small
\renewcommand{\arraystretch}{1.15}
\setlength{\tabcolsep}{9pt}
\caption{Dataset statistics for all five benchmarks. The four Science Q\&A benchmarks are drawn from the reasoning subset (Level 3) of SciKnowEval~\citep{feng2024sciknoweval}; Tool Use is drawn from ToolAlpaca~\citep{tang2023toolalpaca}. All splits are identical to those used by SDPO~\citep{hubotter2026reinforcement}.}
\label{tab:benchmark-stats}
\begin{tabular}{llrrr}
    \toprule
    \textbf{Benchmark} & \textbf{Source} & \textbf{Train} & \textbf{Test} & \textbf{Total} \\
    \midrule
    Chemistry & SciKnowEval & 1{,}890 & 210 & 2{,}100 \\
    Physics & SciKnowEval & 720 & 80 & 800 \\
    Biology & SciKnowEval & 450 & 50 & 500 \\
    Materials & SciKnowEval & 841 & 94 & 935 \\
    Tool Use & ToolAlpaca & 4{,}046 & 68 & 4{,}114 \\
    \bottomrule
\end{tabular}
\end{table}

The four Science Q\&A benchmarks are formatted as four-option single-choice questions targeting undergraduate-level scientific reasoning. Each question presents a problem statement (often involving domain-specific notation such as SMILES strings in Chemistry, physical equations in Physics, protein sequences in Biology, or crystal lattice parameters in Materials) followed by four candidate answers. The Tool Use benchmark pairs a natural-language user request with a tool-API specification (including function names, parameter schemas, and output types); the model must produce the correct tool call in a structured Thought/Action/Action Input format.

Table~\ref{tab:benchmark-examples} shows one representative example from each benchmark.
\begin{table}[htbp]
\centering
\footnotesize
\renewcommand{\arraystretch}{1.25}
\setlength{\tabcolsep}{4pt}
\caption{One representative example from each benchmark. Science Q\&A examples show the question stem and four answer options; the Tool Use example shows the user query and the expected structured tool call (API specification omitted for brevity; see Table~\ref{tab:prompt-templates} for the prompt template).}
\label{tab:benchmark-examples}
\begin{tabular}{@{}p{1.5cm}p{10.2cm}p{3.3cm}@{}}
    \toprule
    \textbf{Benchmark} & \textbf{Question (excerpt)} & \textbf{Answer} \\
    \midrule
    Chemistry &
    \begin{minipage}[t]{10cm}
    What is the correct logarithmic solubility value of the molecule ``Cc1cc(=O)[nH]c(=S)[nH]1'' in aqueous solutions?\\[2pt]
    A: $-3.01$ \quad B: $-2.436$\\
    C: $-4.576$ \quad D: $1.1$
    \end{minipage} & B \\
    Physics &
    \begin{minipage}[t]{10cm}
    A charged particle produces an electric field with a magnitude of $2.0\;\mathrm{N/C}$ at a point that is $50\;\mathrm{cm}$ away from the particle. What is the magnitude of the particle's charge?\\[2pt]
    A: 50 pC \quad B: 56 pC\\
    C: 60 pC \quad D: 64 pC
    \end{minipage} & B \\
    Biology &
    \begin{minipage}[t]{10cm}
    What is the folding stability score of the protein sequence ``GSSTTRYRFLDEEEARRAAKEWARRGYQVHVTQNGTYWEVEVR''?\\[2pt]
    A: $-0.01$ \quad B: $1.69$\\
    C: $2.49$ \quad D: $0.45$
    \end{minipage} & B \\
    Materials &
    \begin{minipage}[t]{10cm}
    Given the following crystal structure parameters for the material RbLa\textsubscript{9}(IrO\textsubscript{6})\textsubscript{4} (Material ID: mp-560657), calculate the volume of the unit cell (in \AA\textsuperscript{3}). Lattice: $a{=}7.82$, $b{=}7.82$, $c{=}17.88$\,\AA; $\alpha{=}\beta{=}\gamma{=}90^\circ$.\\[2pt]
    A: 1025.67 \quad B: 1094.31\\
    C: 1200.45 \quad D: 1150.78
    \end{minipage} & B \\
    Tool Use &
    \begin{minipage}[t]{10cm}
    \textit{(Given the Axolotl API specification)}\\[2pt]
    Question: ``I'm looking for an axolotl that is wild in color and medium in size. Can you help me find some pictures?''
    \end{minipage} &
    \begin{minipage}[t]{3.1cm}\ttfamily\scriptsize
    Action: searchAxolotlImages\\
    Action Input: \{"color": "wild", "gender": "", "size": "medium", "page": 1\}
    \end{minipage} \\
    \bottomrule
\end{tabular}
\end{table}
\section{Additional Experiment Results}
\subsection{Additional Rhythm-Gating Visualizations}
\label{sec:app-rhythm-vis}

Figure~\ref{fig:rhythm-vis-tooluse} provides a Tool-Use counterpart to the Physics example in Figure~\ref{fig:rhythm-vis-physics}.
The same red/blue scheme is used: red for tokens kept by the gate ($b_t^{\mathrm{reb}}g_t^{\mathrm{rhythm}}$) and blue for rebellious exceedances suppressed by the gate ($b_t^{\mathrm{reb}}(1-g_t^{\mathrm{rhythm}})$).
On this trajectory, gated credit concentrates on schema-critical tokens (e.g., action names and argument values), while many formatting and connective exceedances are filtered out.

\begin{figure}[ht]
    \centering
    \includefig[width=0.9\linewidth]{Figures/visual_tooluse.png}
    \caption{Rhythm-gating visualization on a correct Tool-Use rollout
    (Qwen3-8B trained with DRIFT on Tool Use).
    Color coding matches Figure~\ref{fig:rhythm-vis-physics}.}
    \label{fig:rhythm-vis-tooluse}
\end{figure}

\subsection{Validation Performance}
Figure~\ref{fig:appendix_sciknoweval_validation} shows the validation curves of DRIFT on the STEM datasets. Both mean@16 and best@16 improve steadily throughout training, indicating that the gains are not confined to a single checkpoint.
\begin{figure*}[ht]
    \centering
    \includegraphics[width=0.90\textwidth]{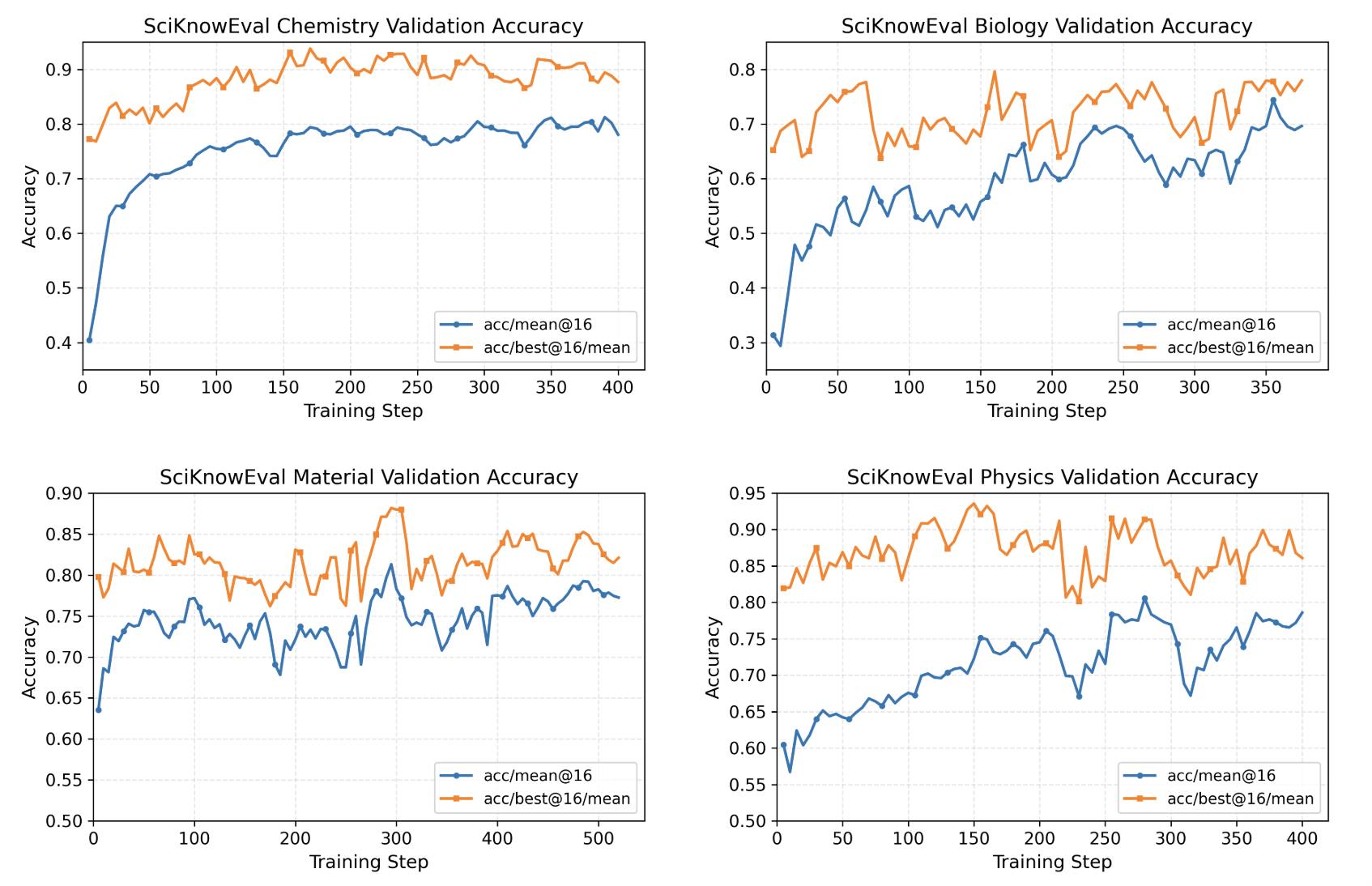}
    \caption{\textbf{Validation performance on the STEM datasets.} In addition to improvements in mean@16, best@16 also increases steadily throughout training.}
    \label{fig:appendix_sciknoweval_validation}
\end{figure*}

\subsection{Best@16 Comparison}
Table~\ref{tab:appendix_best16_comparison} reports best@16 accuracy on Qwen3-8B. DRIFT outperforms SDPO and GRPO on all five tasks, with particularly pronounced gains on materials and tool use.
\begin{table}[ht]
    \centering
    \small
    \caption{\textbf{Best@16 performance on Qwen3-8B.} DRIFT outperforms SDPO and GRPO across all five tasks, with particularly pronounced gains on materials and tool use.}
    \label{tab:appendix_best16_comparison}
    \begin{tabular}{lccccc}
        \toprule
        Method & Biology & Chemistry & Material & Physics & Tool Use \\
        \midrule
        SDPO  & 78.8 & 89.4 & 80.4 & 91.6 & 73.4 \\
        GRPO  & 80.0 & 93.5 & 82.9 & 87.2 & 70.8 \\
        \textbf{DRIFT} & \textbf{80.1} & \textbf{94.3} & \textbf{87.9} & \textbf{93.8} & \textbf{83.7} \\
        \bottomrule
    \end{tabular}
\end{table}

\subsection{More Training Dynamics}
Figure~\ref{fig:appendix_more_training_dynamics} extends the Tool-Use dynamics of Section~\ref{sec:training_dynamics} to biology, chemistry, and materials. The same qualitative trends---bounded response length, growing success-buffer reuse, and a shift from hard toward easy/medium routing---appear across these STEM tasks.
\begin{figure*}[t]
    \centering
    \scriptsize
    \setlength{\tabcolsep}{10pt}
    \renewcommand{\arraystretch}{0.6}
    \begin{tabular}{ccc}
        \textbf{Biology} & \textbf{Chemistry} & \textbf{Material} \\
        \includegraphics[width=0.30\textwidth,height=0.105\textheight,keepaspectratio]{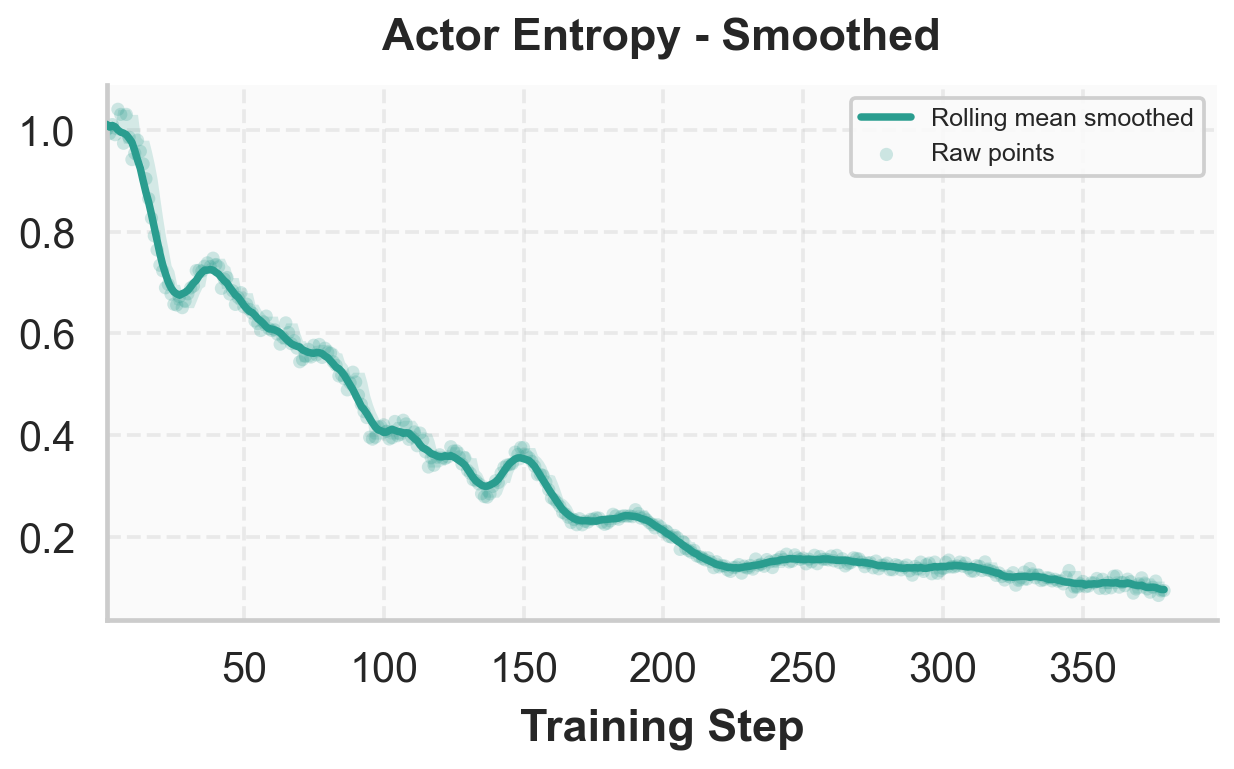} &
        \includegraphics[width=0.30\textwidth,height=0.105\textheight,keepaspectratio]{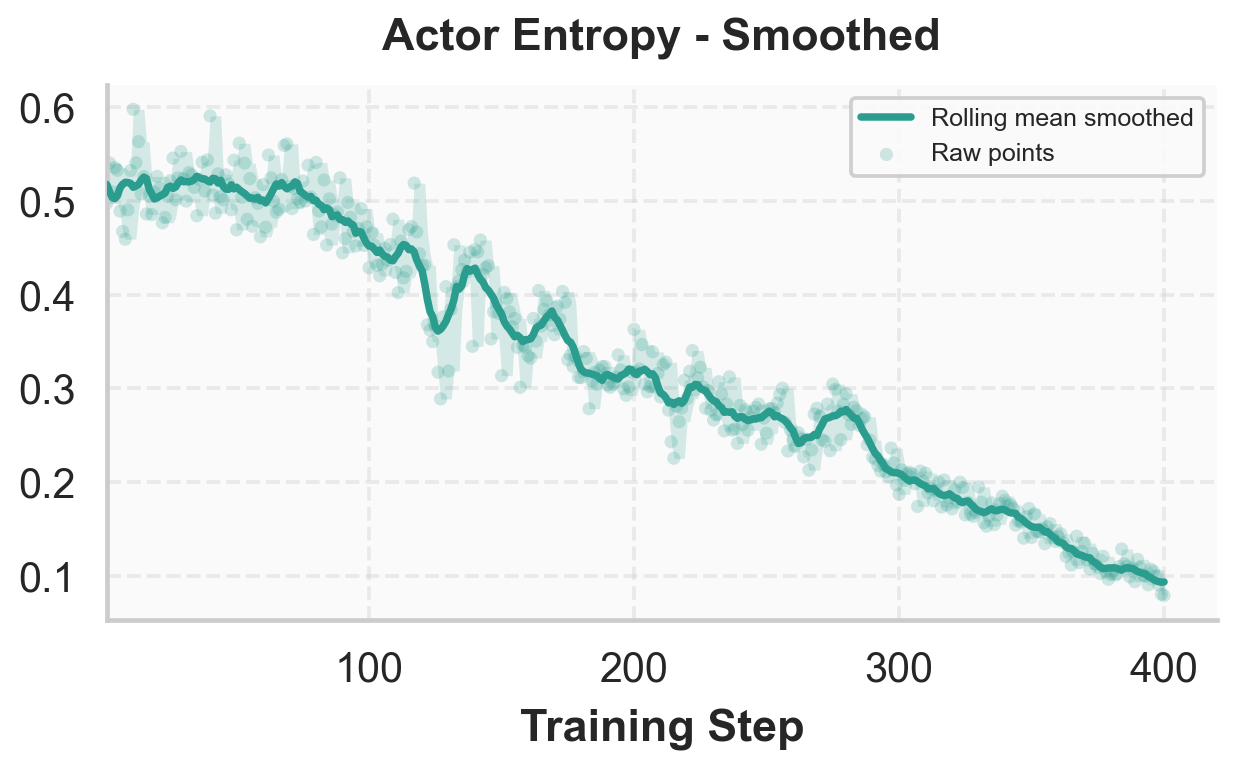} &
        \includegraphics[width=0.30\textwidth,height=0.105\textheight,keepaspectratio]{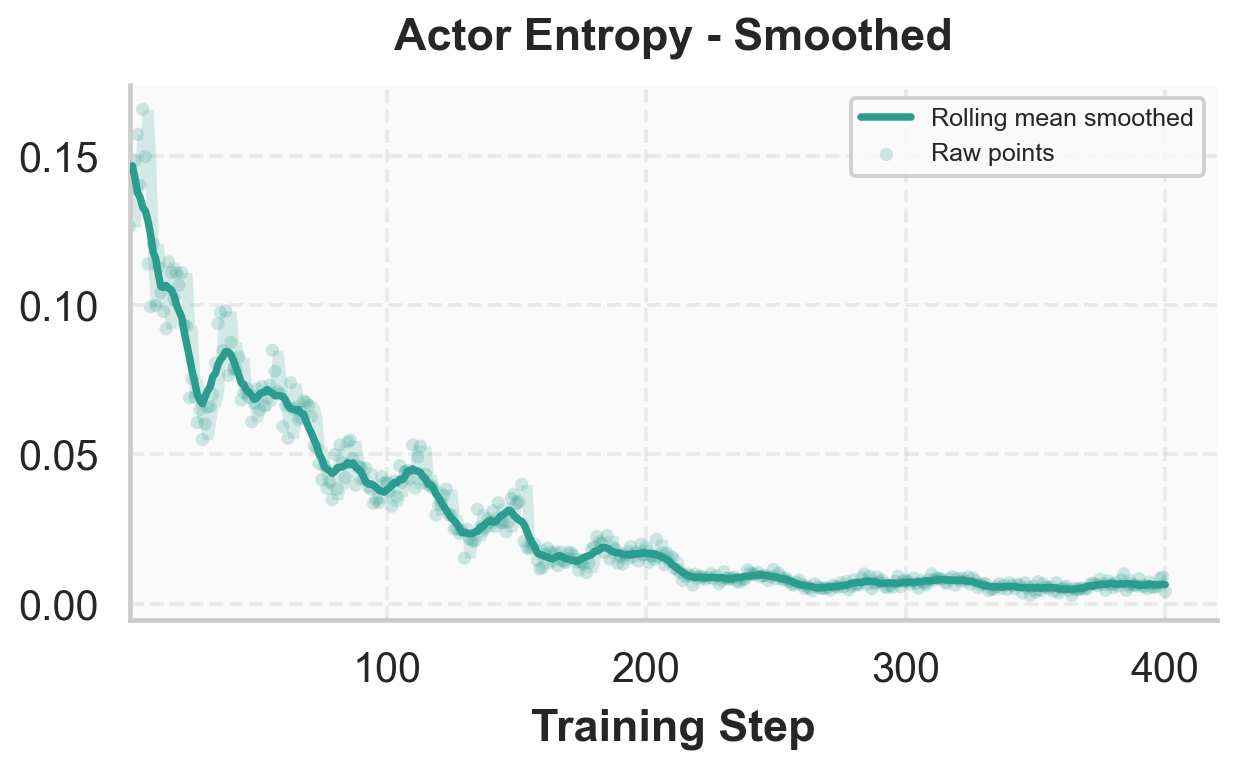} \\
        \includegraphics[width=0.30\textwidth,height=0.105\textheight,keepaspectratio]{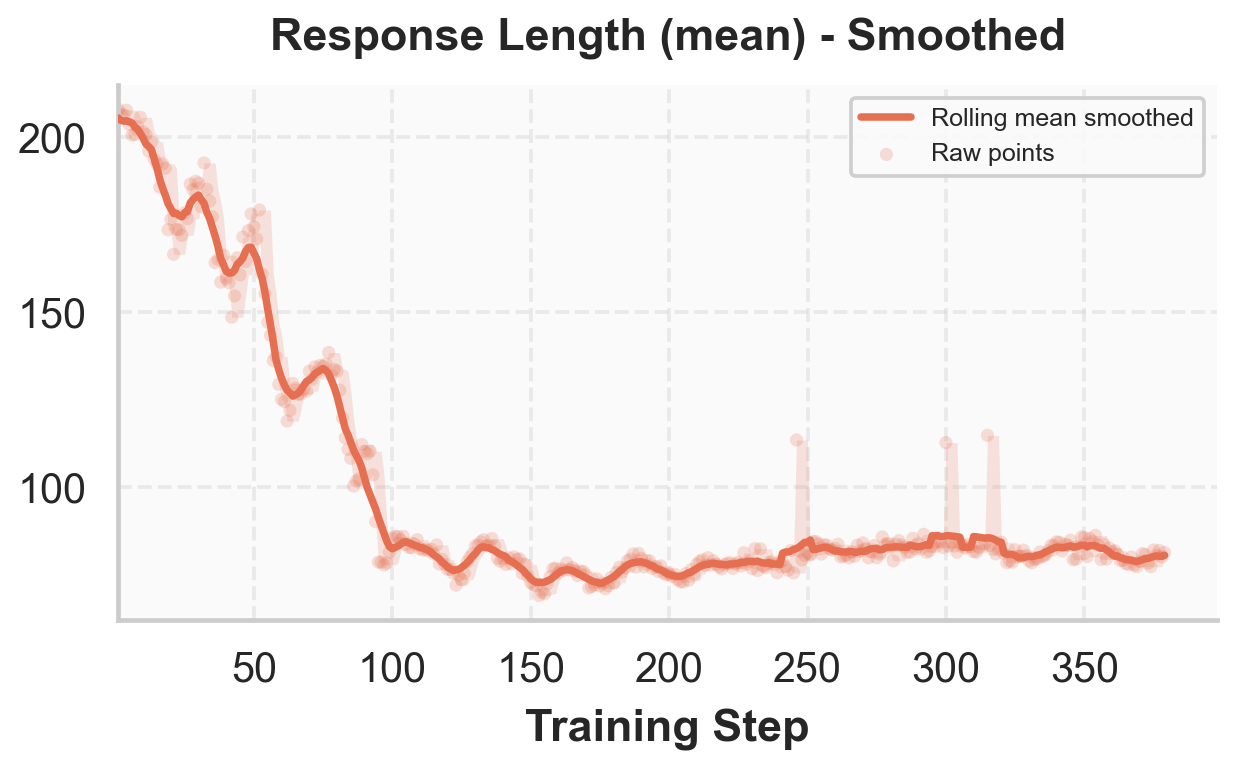} &
        \includegraphics[width=0.30\textwidth,height=0.105\textheight,keepaspectratio]{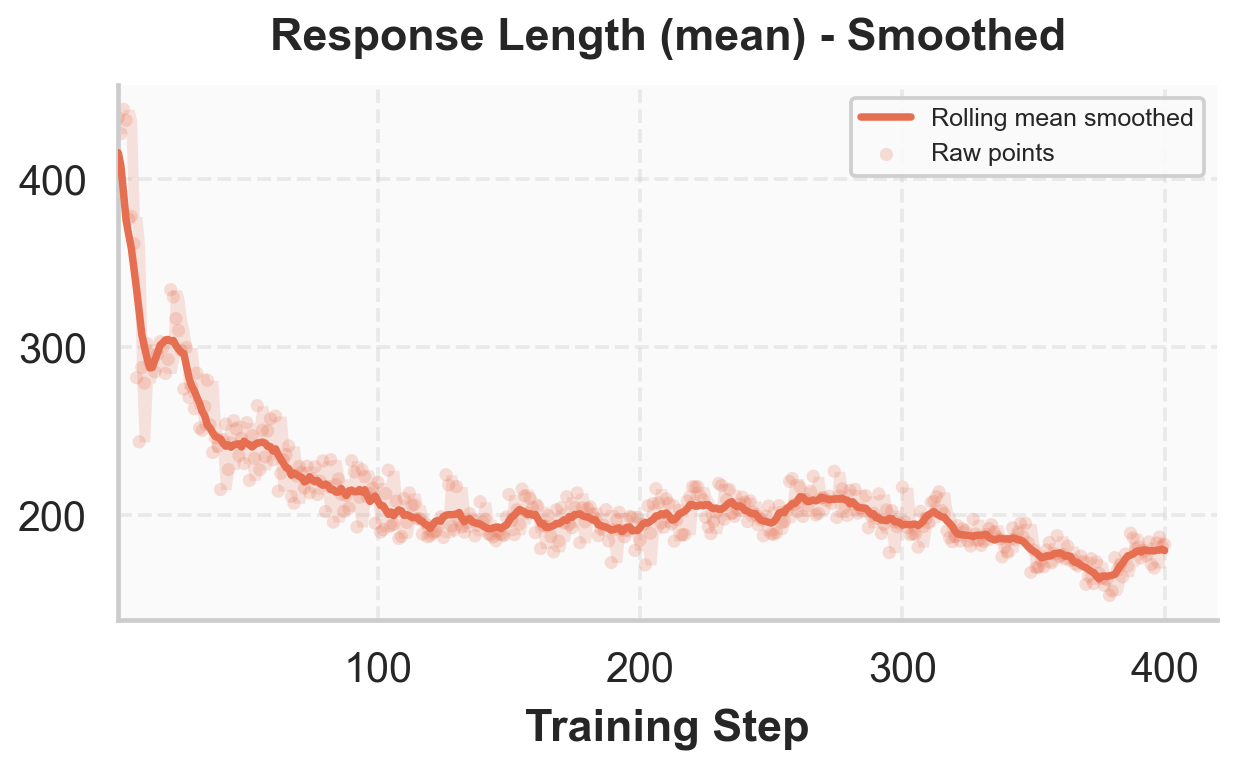} &
        \includegraphics[width=0.30\textwidth,height=0.105\textheight,keepaspectratio]{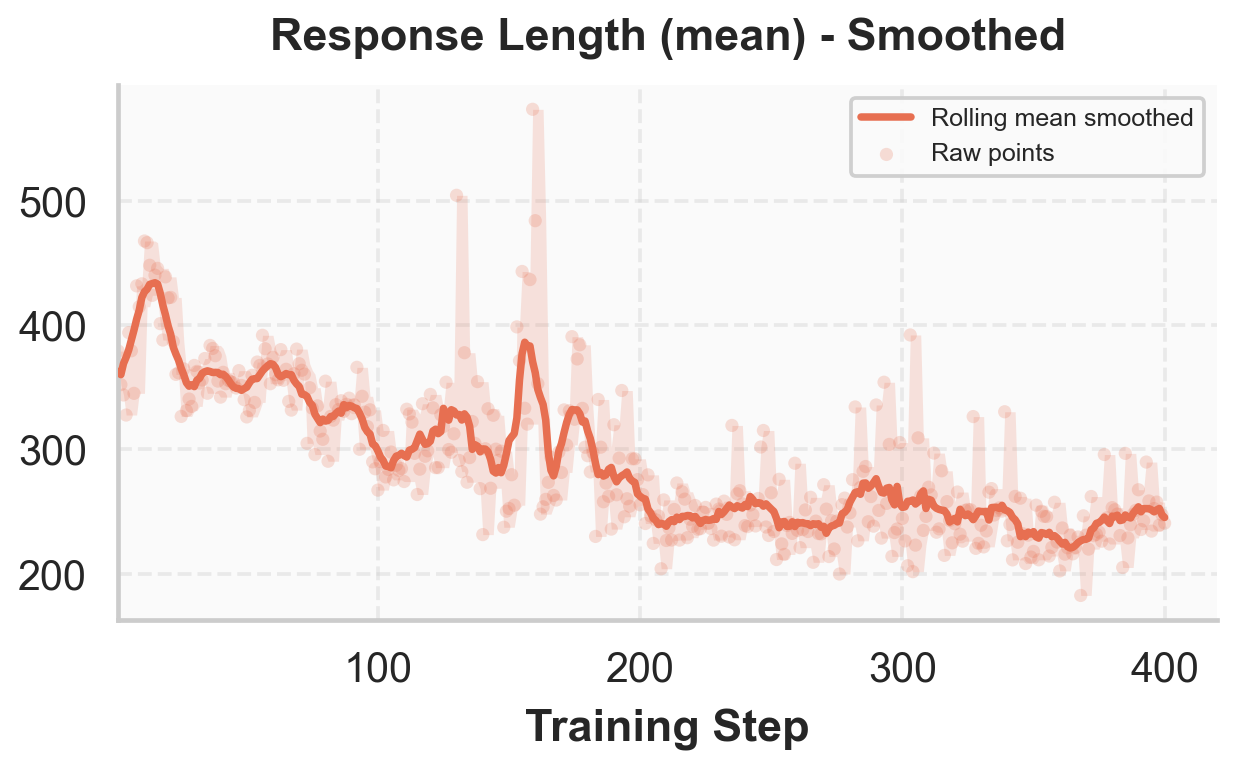} \\
        \includegraphics[width=0.30\textwidth,height=0.105\textheight,keepaspectratio]{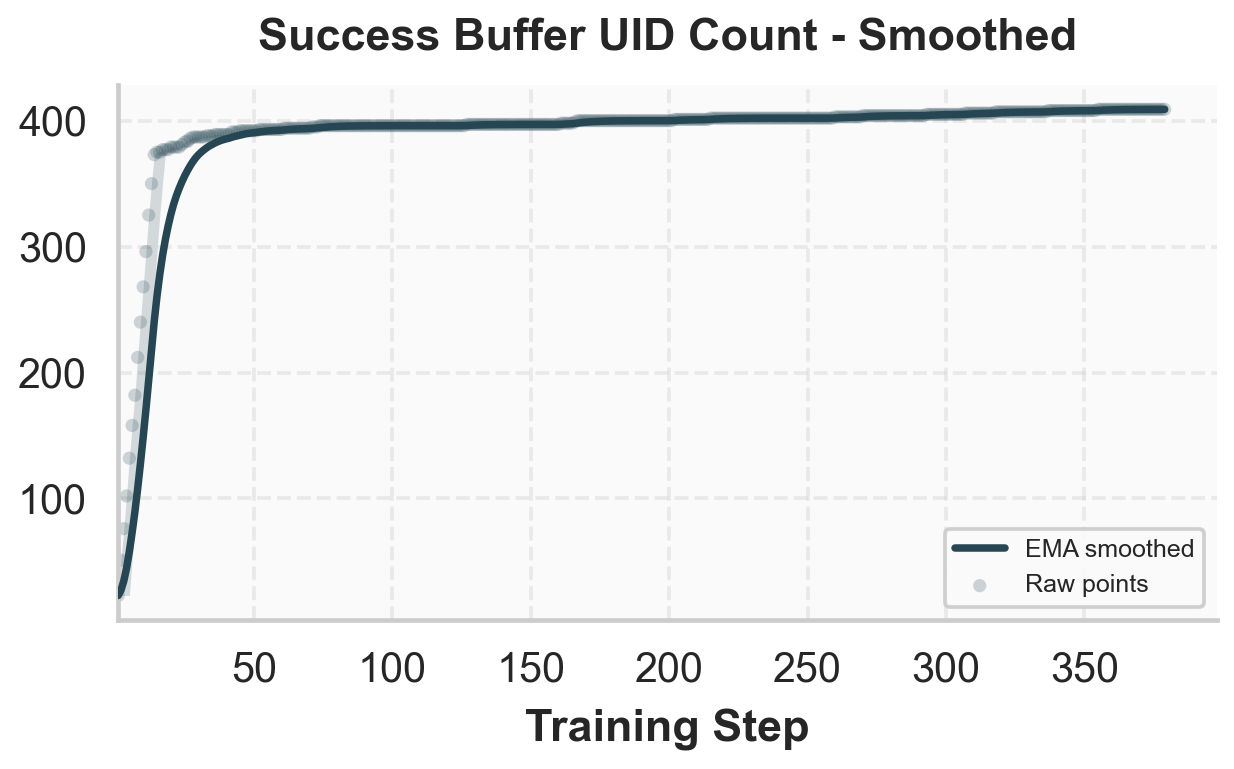} &
        \includegraphics[width=0.30\textwidth,height=0.105\textheight,keepaspectratio]{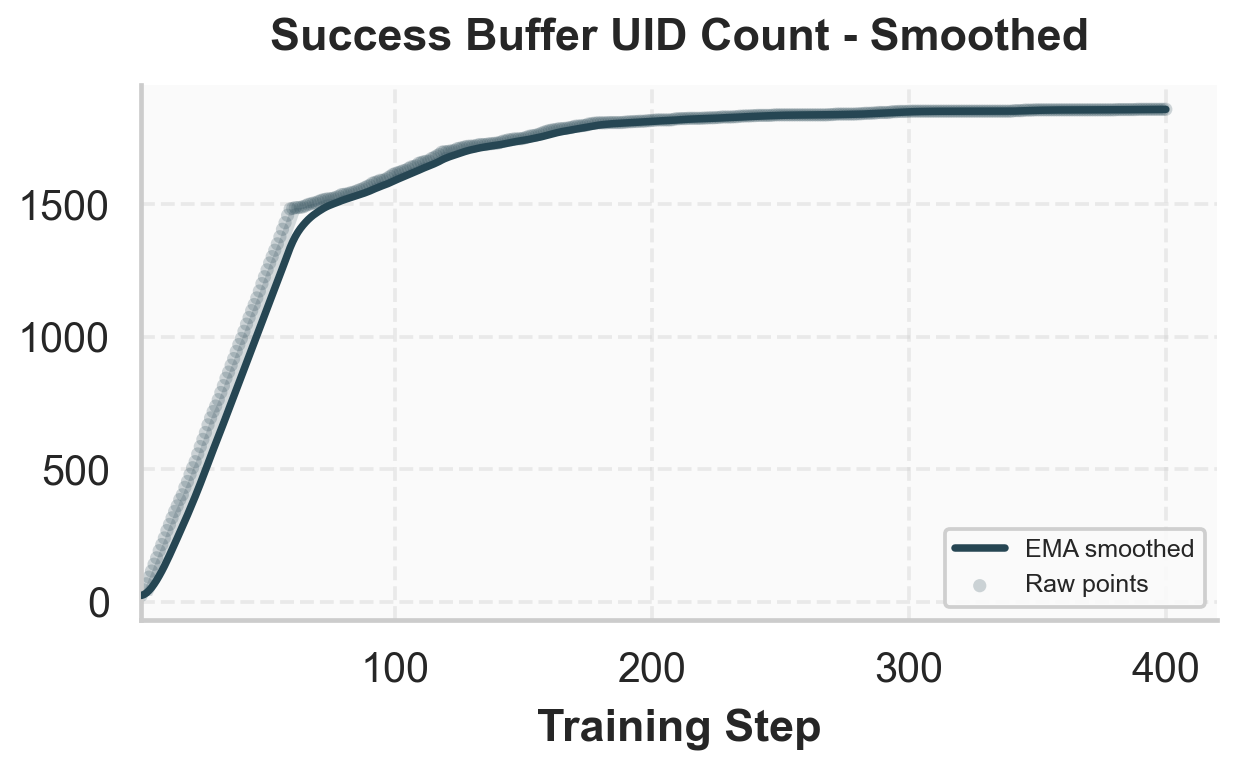} &
        \includegraphics[width=0.30\textwidth,height=0.105\textheight,keepaspectratio]{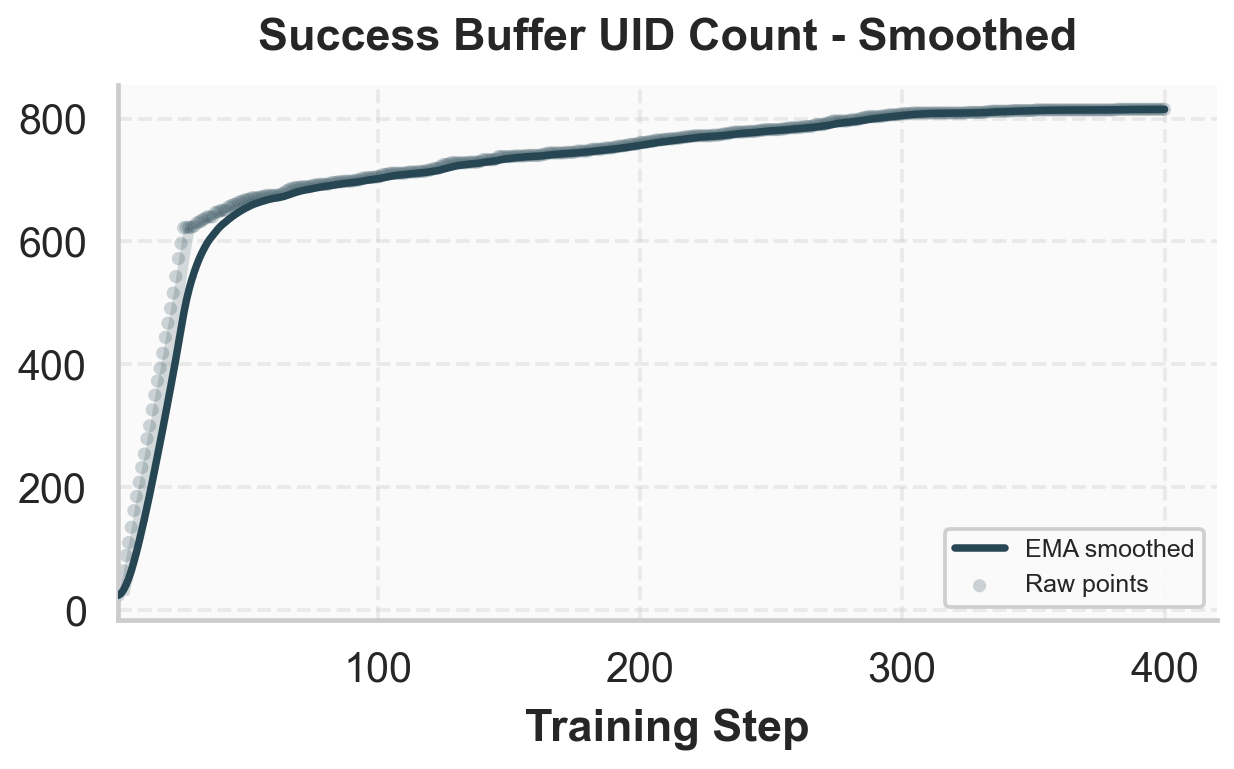} \\
        \includegraphics[width=0.30\textwidth,height=0.105\textheight,keepaspectratio]{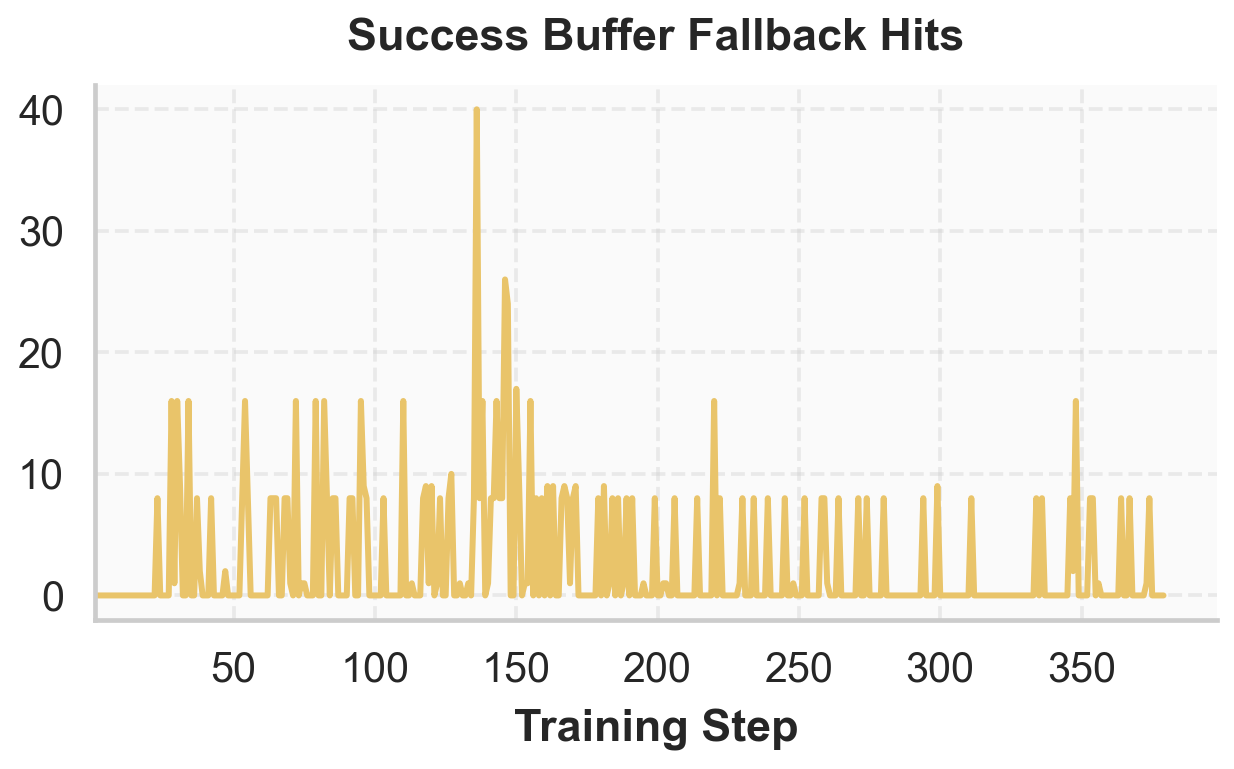} &
        \includegraphics[width=0.30\textwidth,height=0.105\textheight,keepaspectratio]{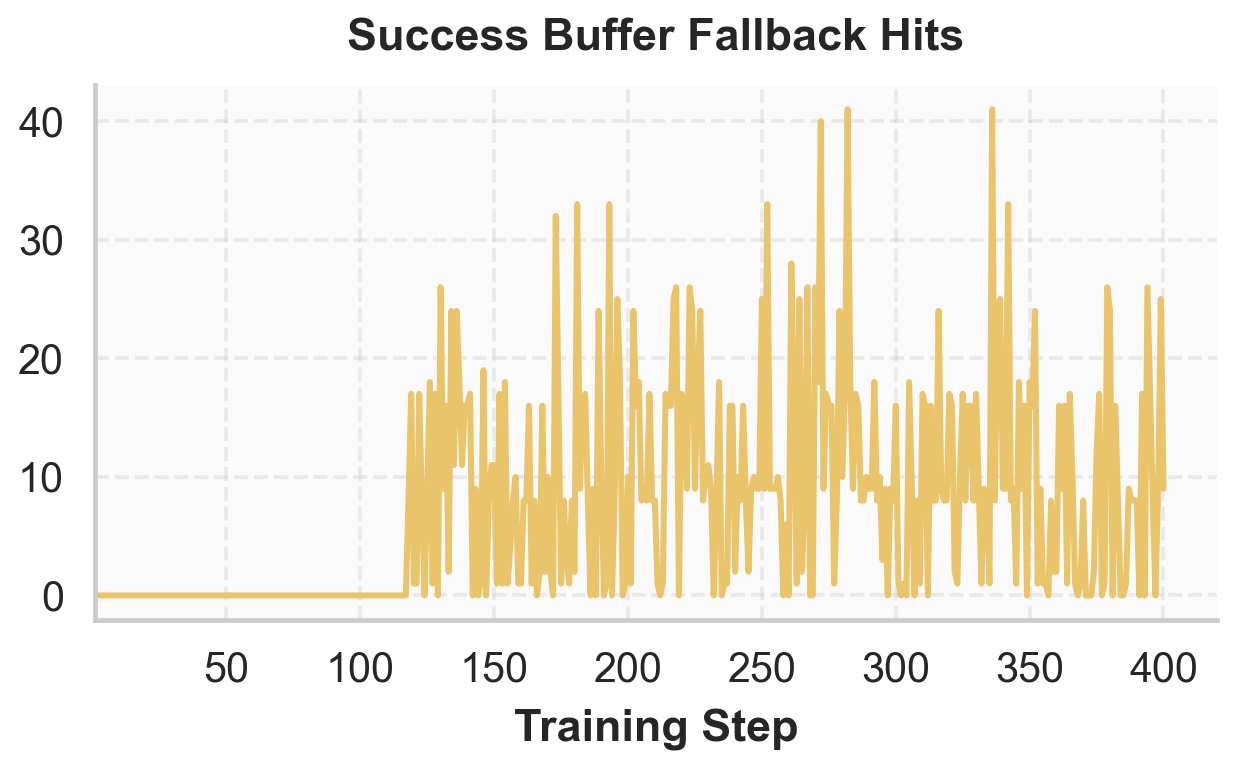} &
        \includegraphics[width=0.30\textwidth,height=0.105\textheight,keepaspectratio]{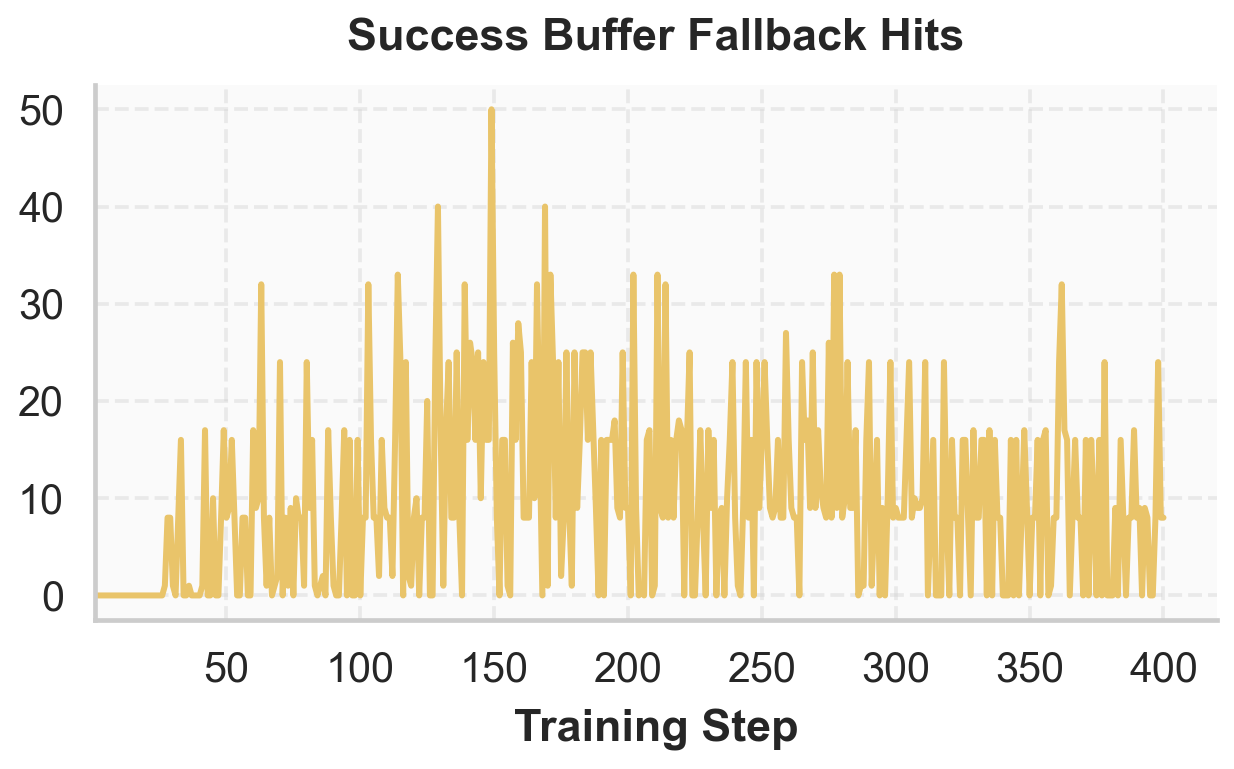} \\
        \includegraphics[width=0.30\textwidth,height=0.105\textheight,keepaspectratio]{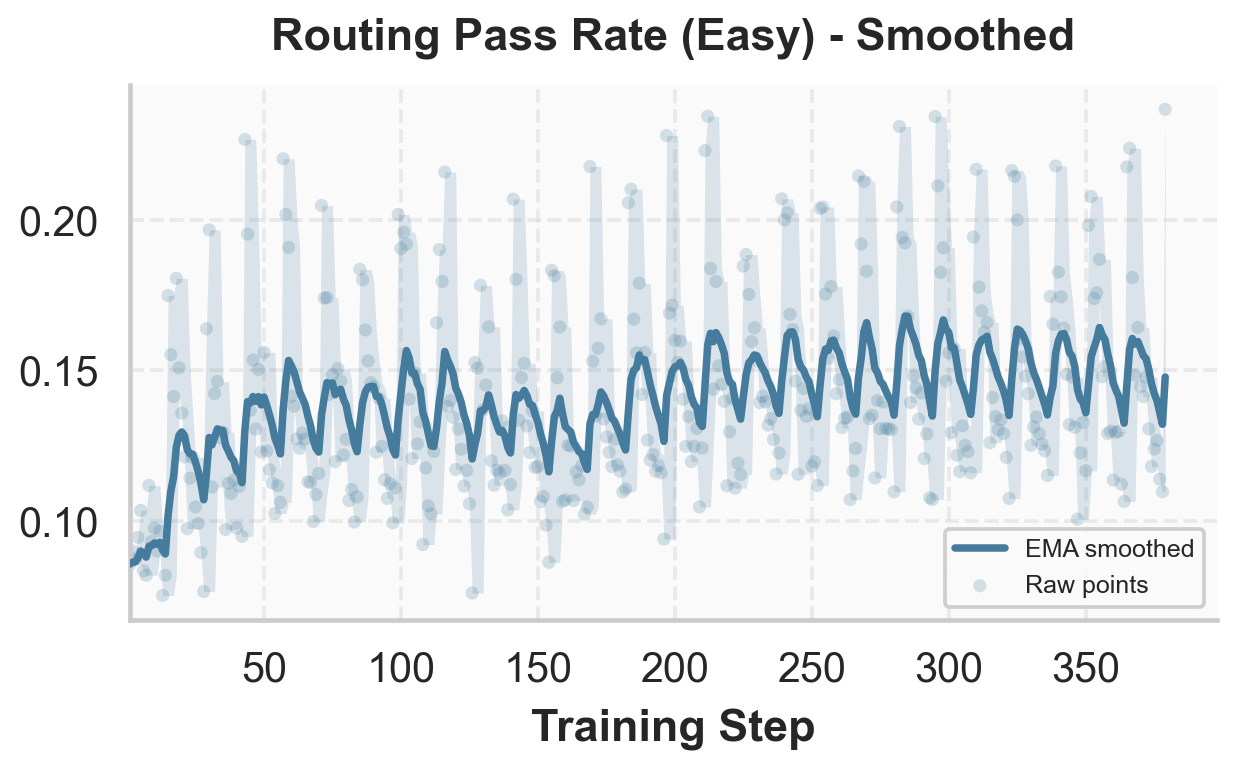} &
        \includegraphics[width=0.30\textwidth,height=0.105\textheight,keepaspectratio]{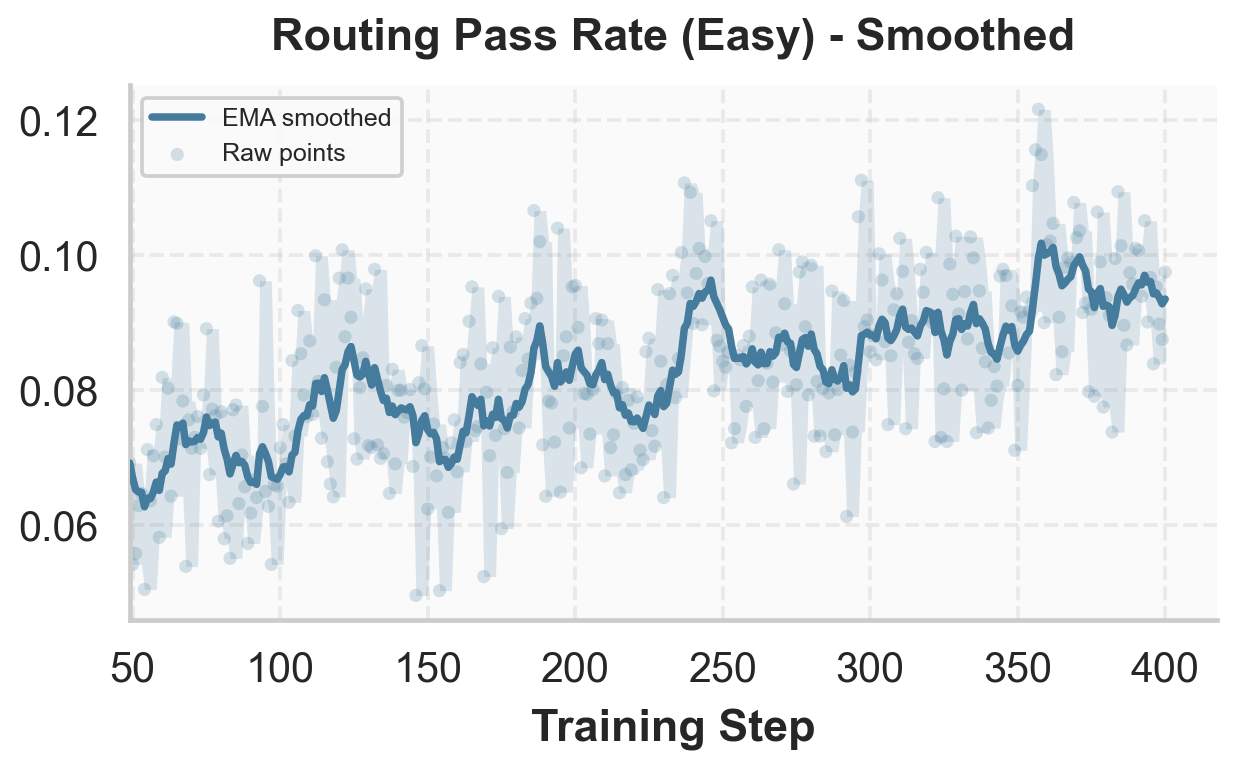} &
        \includegraphics[width=0.30\textwidth,height=0.105\textheight,keepaspectratio]{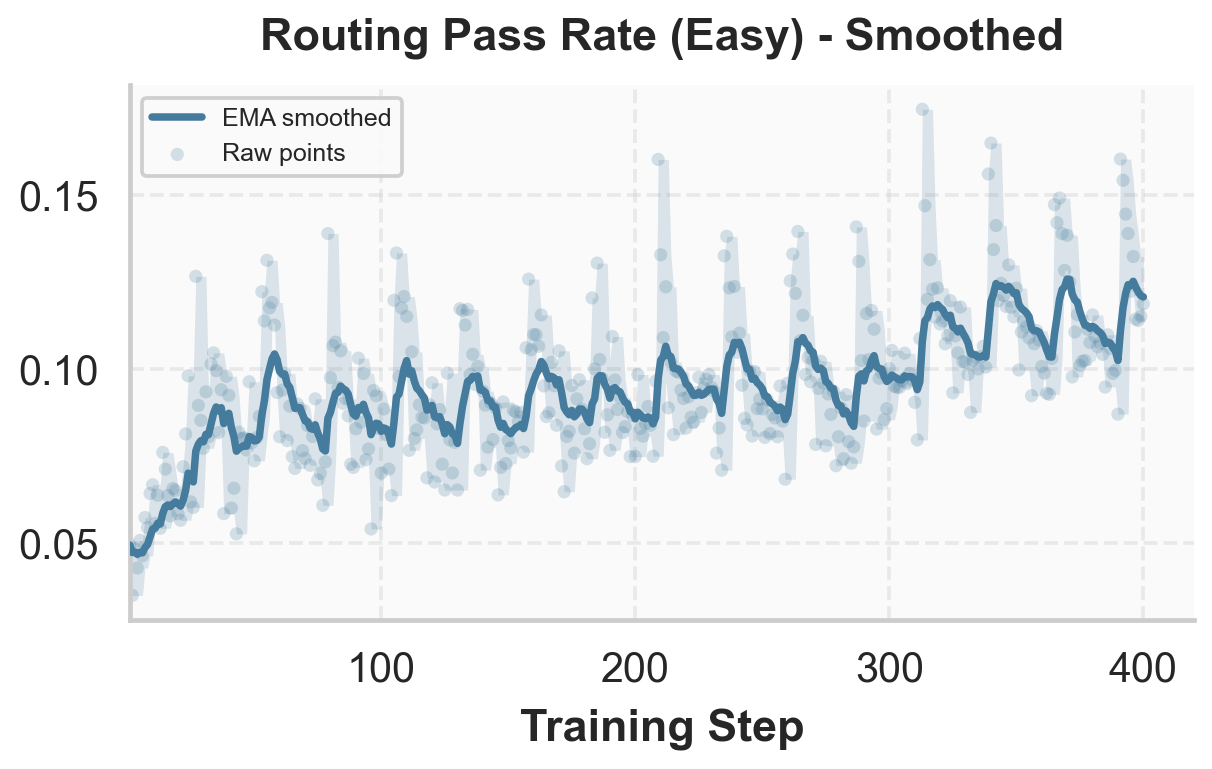} \\
        \includegraphics[width=0.30\textwidth,height=0.105\textheight,keepaspectratio]{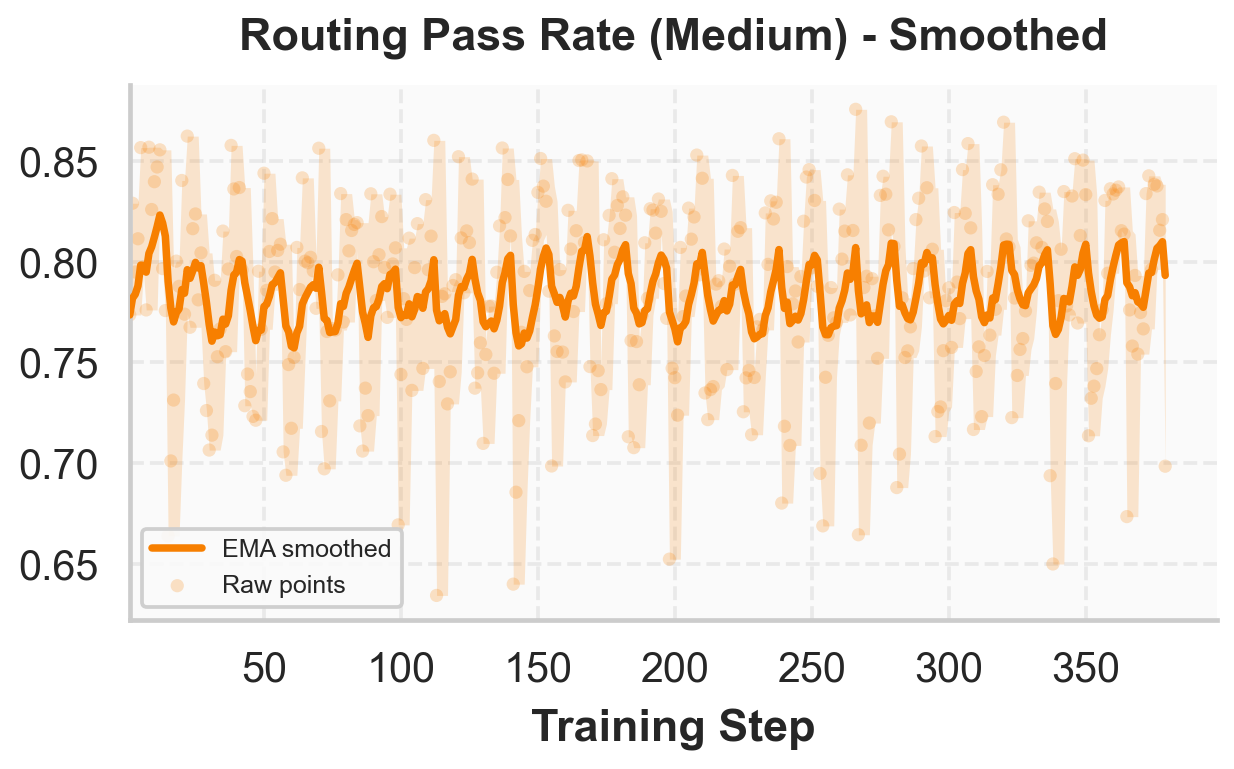} &
        \includegraphics[width=0.30\textwidth,height=0.105\textheight,keepaspectratio]{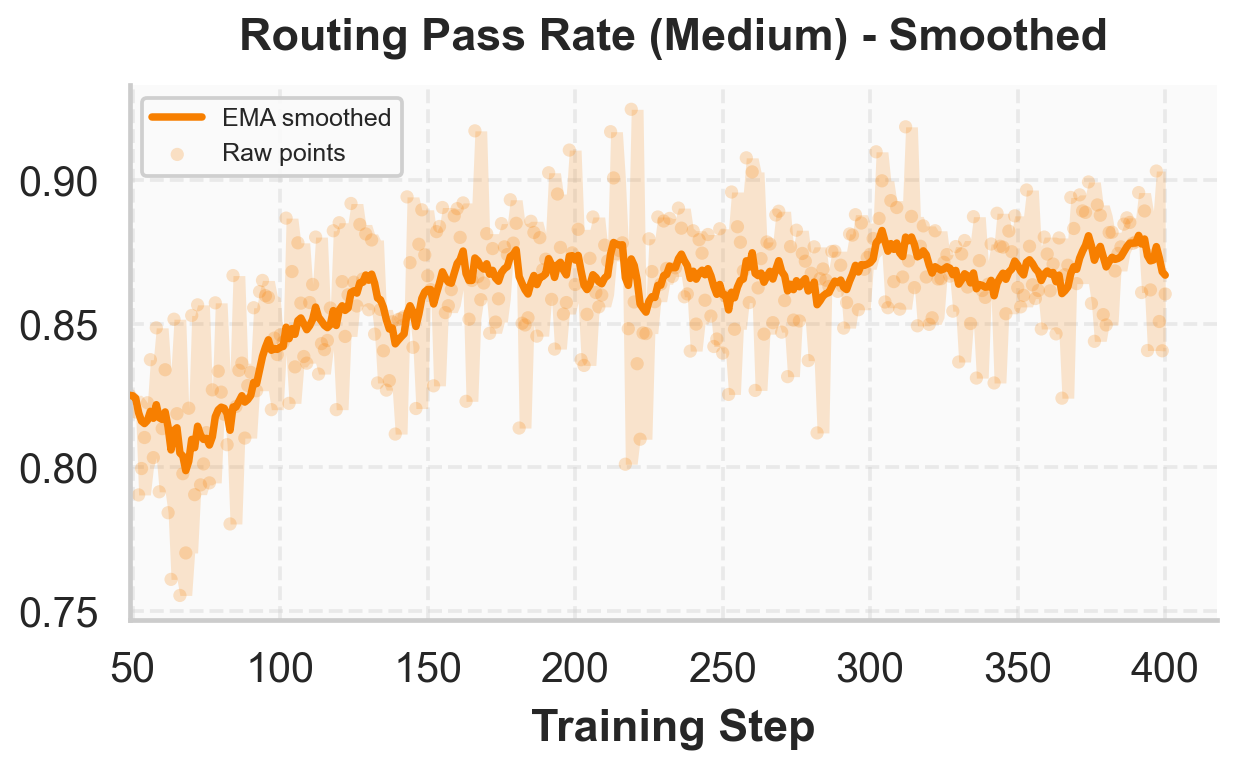} &
        \includegraphics[width=0.30\textwidth,height=0.105\textheight,keepaspectratio]{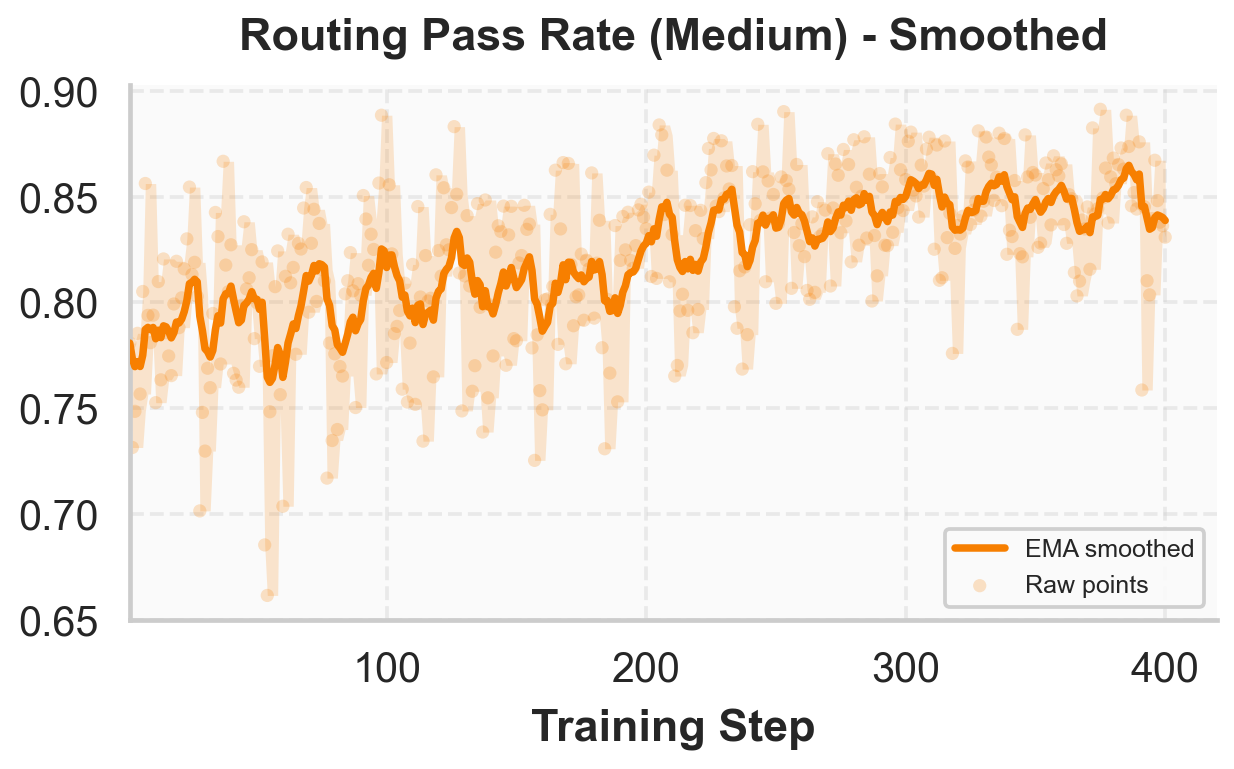} \\
        \includegraphics[width=0.30\textwidth,height=0.105\textheight,keepaspectratio]{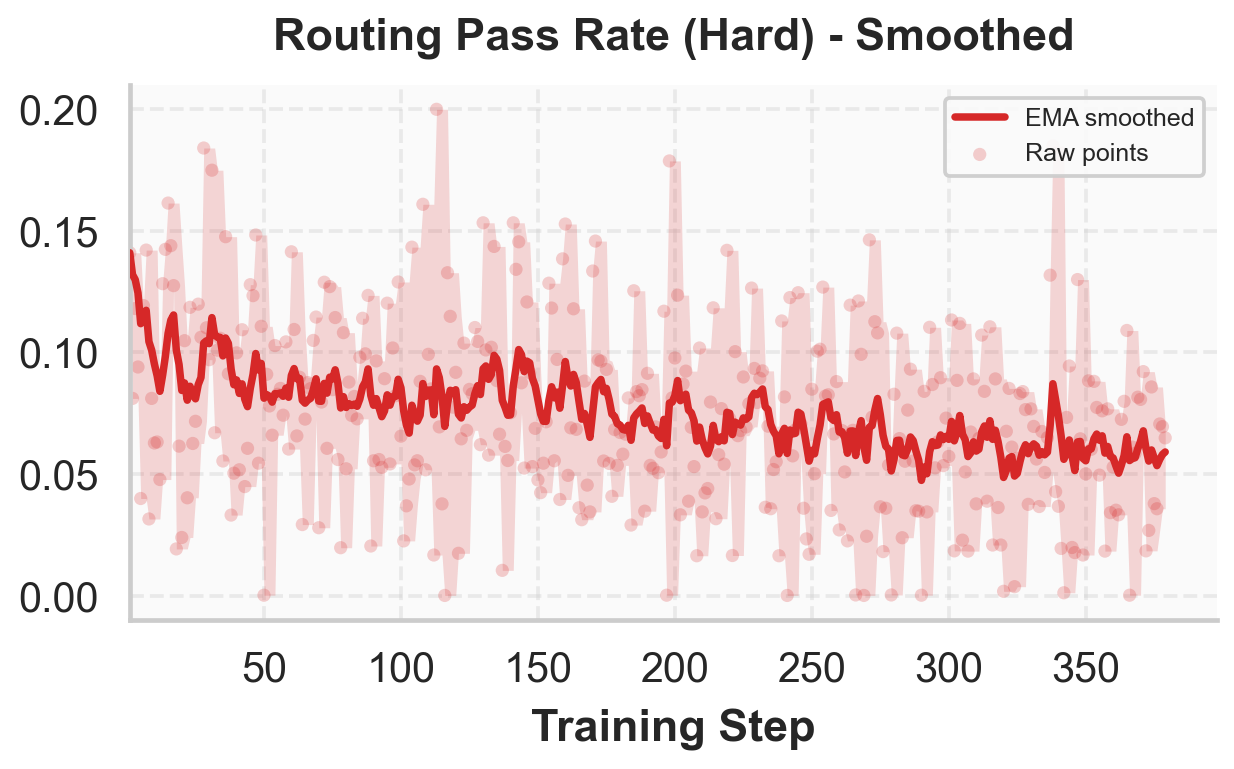} &
        \includegraphics[width=0.30\textwidth,height=0.105\textheight,keepaspectratio]{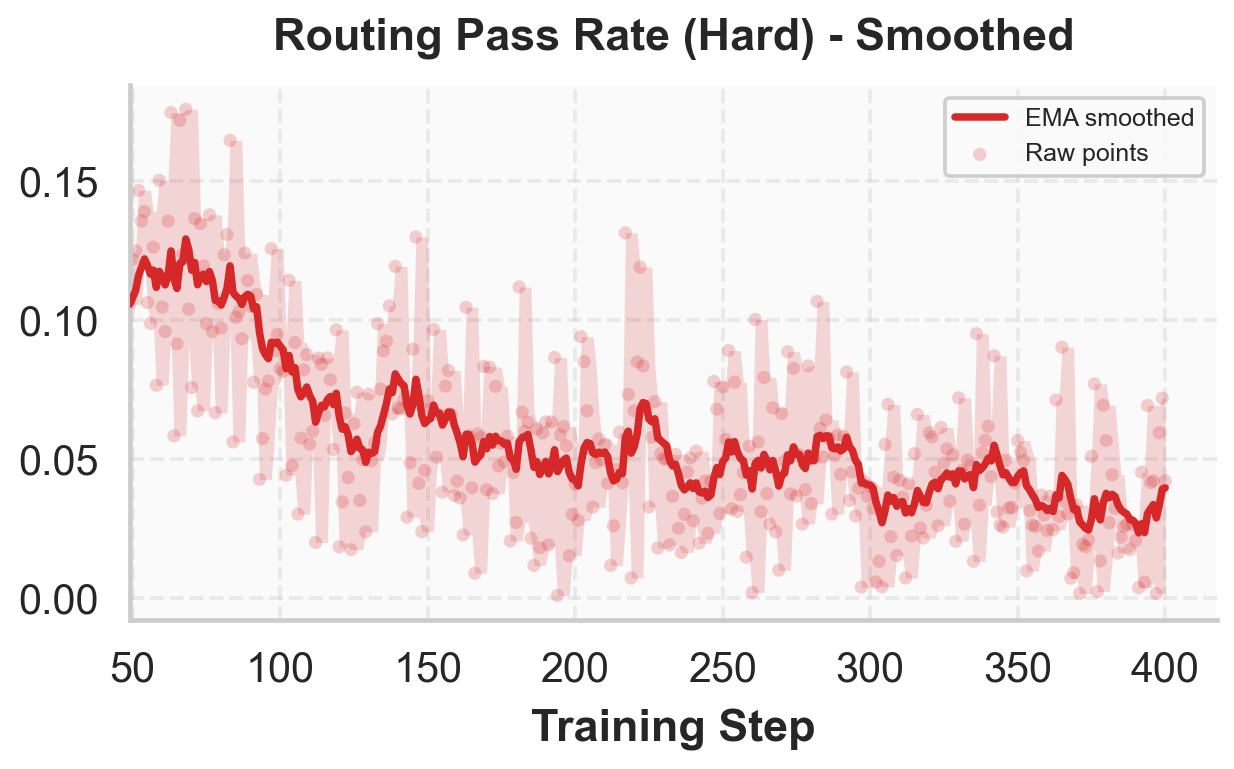} &
        \includegraphics[width=0.30\textwidth,height=0.105\textheight,keepaspectratio]{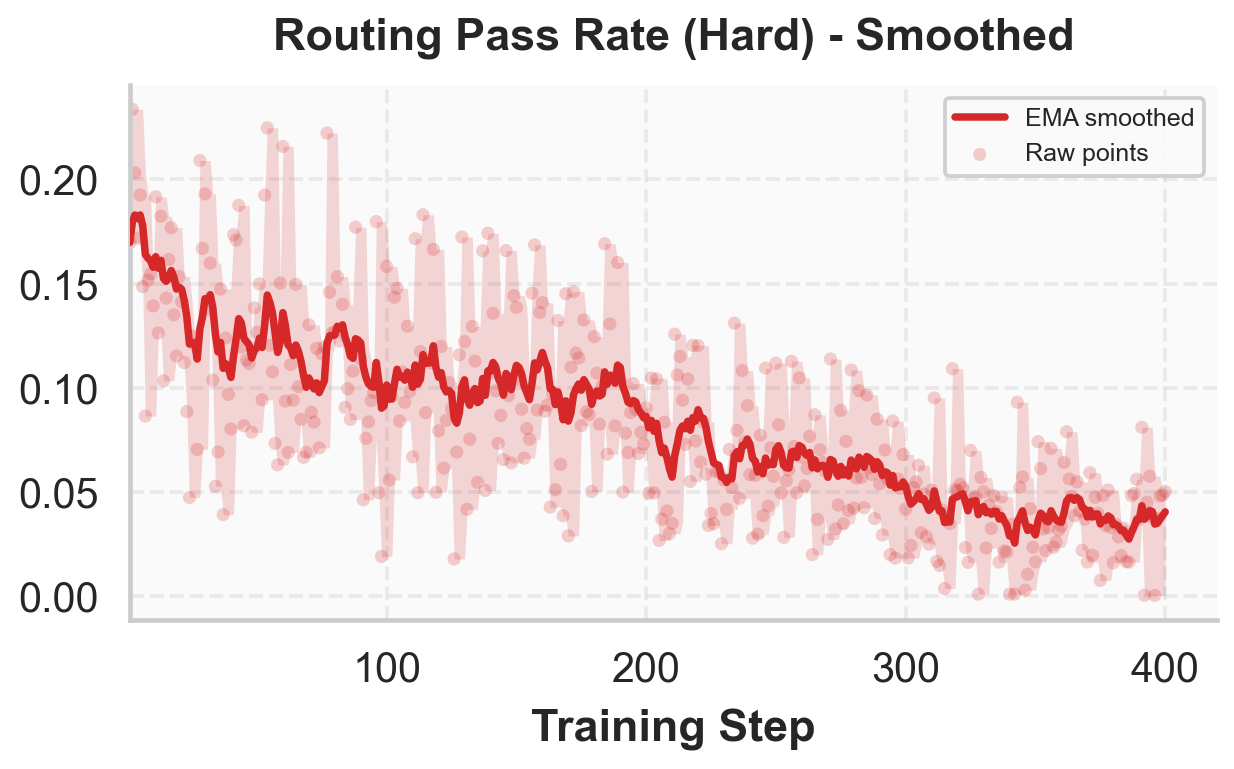}
    \end{tabular}
    \caption{\textbf{Additional training dynamics on STEM datasets.} Columns correspond to biology, chemistry, and material tasks. Rows from top to bottom show actor entropy, response length, success-buffer UID count, success-buffer fallback hit count, easy-problem fraction, medium-problem fraction, and difficult-problem fraction.}
    \label{fig:appendix_more_training_dynamics}
\end{figure*}

\end{document}